\documentclass[twoside]{article}

%
\usepackage[accepted]{aistats2019}
\usepackage[numbers]{natbib}
%




\usepackage{amsthm,amsmath,amsfonts,natbib,graphicx,bm,booktabs, soul, cancel, mathtools}
\usepackage{mathabx}


\numberwithin{equation}{section}
\theoremstyle{plain}
\newtheorem{thm}{Theorem}[section]
\newtheorem{lem}{Lemma}[section]
\newtheorem{prop}{Proposition}[section]
\newtheorem{cor}{Corollary}[section]
\newtheorem{remark}{Remark}[section]
\newtheorem{defn}{Definition}

\DeclareMathOperator*{\argmin}{arg\,min}

\DeclareMathOperator*{\E}{\mathbb{E}}

\DeclareMathOperator{\R}{\mathcal{R}}
\def\F{\mathcal{F}}
\newcommand{\vertiii}[1]{{\vert\kern-0.1ex\vert\kern-0.1ex\vert #1 
    \vert\kern-0.1ex\vert\kern-0.1ex\vert}}
\newcommand{\RNum}[1]{\uppercase\expandafter{\romannumeral #1\relax}}


\def\Hess{\mathop{\rm Hess}}

\setlength{\lineskip}{0pt}
\begin{document}

%

%

\twocolumn[

\aistatstitle{Fisher-Rao Metric, Geometry, and Complexity of Neural Networks}

\aistatsauthor{ Tengyuan Liang \And Tomaso Poggio \And  Alexander Rakhlin \And James Stokes }

\aistatsaddress{ University of Chicago \And  MIT \And MIT \And University of Pennsylvania }
]

\begin{abstract}
We study the relationship between geometry and capacity measures for deep neural networks from an invariance viewpoint. We introduce a new notion of capacity\,---\,the Fisher-Rao norm\,---\,that possesses desirable invariance properties and is motivated by Information Geometry. We discover an analytical characterization of the new capacity measure, through which we establish norm-comparison inequalities and further show that the new measure serves as an umbrella for several existing norm-based complexity measures. We discuss upper bounds on the generalization error induced by the proposed measure. Extensive numerical experiments on CIFAR-10 support our theoretical findings. Our theoretical analysis rests on a key structural lemma about partial derivatives of multi-layer rectifier networks.
\end{abstract}

\section{Introduction}
Beyond their remarkable representation and memorization ability, deep neural networks empirically perform well in out-of-sample prediction. This intriguing out-of-sample generalization property poses two fundamental theoretical questions: (1) What are the complexity notions that control the generalization aspects of neural networks? (2) Why does stochastic gradient descent, or other variants, find  parameters with small complexity?

In this paper we approach the generalization question for deep neural networks from a geometric invariance vantage point.  The motivation behind invariance is twofold: (1) The specific parametrization of the neural network is arbitrary and should not impact its generalization power. As pointed out in \citep{neyshabur2015path}, for example, there are many continuous operations on the parameters of ReLU nets that will result in exactly the same prediction and thus generalization can only depend on the equivalence class obtained by identifying parameters under these transformations. (2) Although flatness of the loss function has been linked to generalization \citep{hochreiter1997flat}, existing definitions of flatness are neither invariant to nodewise re-scalings of ReLU nets nor general coordinate transformations \citep{dinh2017sharp} of the parameter space, which calls into question their utility for describing generalization.

It is thus natural to argue for a purely geometric characterization of generalization that is invariant under the aforementioned transformations and additionally resolves the conflict between flat minima and the requirement of invariance. 
Information geometry is concerned with the study of geometric invariances arising in the space of probability distributions, so we will leverage it to motivate a particular geometric notion of complexity\,---\,the Fisher-Rao norm. From an algorithmic point of view the steepest descent induced by this geometry is precisely the natural gradient \citep{amari1998natural}.  From the generalization viewpoint, the Fisher-Rao norm naturally incorporates distributional aspects of the data and harmoniously unites elements of flatness and norm which have been argued to be crucial for explaining generalization \citep{neyshabur2017exploring}. 

Statistical learning theory equips us with many tools to analyze out-of-sample performance. The Vapnik-Chervonenkis dimension is one possible complexity notion, yet it may be too large to explain generalization in over-parametrized models, since it scales with the size (dimension) of the network. 
In contrast, under additional distributional assumptions of a margin, Perceptron (a one-layer network) enjoys a dimension-free error guarantee, with the $\ell_2$ norm playing the role of ``capacity''. These observations (going back to the 60's) have led to the theory of large-margin classifiers, applied to kernel methods, boosting, and neural networks \citep{anthony1999neural}. 
In particular, the analysis of \cite{koltchinskii2002empirical}  combines the empirical margin distribution (quantifying how well the data can be separated) and the Rademacher complexity of a restricted subset of functions. This in turn raises the capacity control question: what is a good notion of the restrictive subset of parameter space for neural networks? Norm-based capacity control provides a possible answer and is being actively studied for deep networks \citep{krogh1992simple, neyshabur2015norm, neyshabur2015path, bartlett2017spectrally, neyshabur2017exploring}, yet the invariances are not always reflected in these capacity notions. In general, it is very difficult to answer the question of which capacity measure is superior. Nevertheless, we will show that our proposed Fisher-Rao norm serves as an umbrella for the previously considered norm-based capacity measures, and it appears to shed light on possible answers to the above question. 

Much of the difficulty in analyzing neural networks stems from their unwieldy recursive definition interleaved with nonlinear maps. In analyzing the Fisher-Rao norm, we proved an identity for the partial derivatives of the neural network that appears to open the door to some of the geometric analysis. In particular, we prove that any stationary point of the empirical objective with hinge loss that perfectly separates the data must also have a large margin. Such an automatic large-margin property of stationary points may link the algorithmic facet of the problem with the generalization property. The same identity gives us a handle on the Fisher-Rao norm and allows us to prove a number of facts about it. Since we expect that the identity may be useful in deep network analysis, we start  by stating this result and its implications in the next section. In Section \ref{sec:fr} we introduce the Fisher-Rao norm and establish through norm-comparison inequalities that it serves as an umbrella for existing norm-based measures of capacity. Using these norm-comparison inequalities we bound the generalization error of various geometrically distinct subsets of the Fisher-Rao ball and provide a rigorous proof of generalization for deep linear networks. Extensive numerical experiments are performed in Section \ref{sec:exp} demonstrating the superior properties of the Fisher-Rao norm.

\section{Geometry of Deep Rectified Networks}\label{sec:geometry}
\begin{defn}
	\label{def:mlp}
	\rm
	The function class $\mathcal{H}_L$ realized by the \textit{feedforward neural network architecture} of depth $L$ with coordinate-wise activation functions $\sigma_l$ is defined as the set of functions $f_\theta : \mathcal{X} \to \mathcal{Y}$ ($\mathcal{X} \subseteq \mathbb{R}^p, \mathcal{Y} \subseteq \mathbb{R}^K$)\footnote{It is possible to generalize the above architecture to include linear pre-processing operations such as zero-padding and average pooling.} with
	$
	f_\theta(x) = \sigma_{L+1}( \sigma_{L}(\ldots \sigma_2 (\sigma_1( x^T W^0) W^1 ) W^2) \ldots) W^{L})
	$
	where $\theta \in \Theta_L \subseteq \mathbb{R}^d$ ($d = p k_1 + \sum_{i=1}^{L-1} k_i k_{i+1} + k_L  K$) and
	$
	\Theta_L = \mathbb{R}^{p \times k_1} \times \mathbb{R}^{k_1 \times k_2} \times \ldots \times \mathbb{R}^{k_{L-1}\times k_{L}} \times \mathbb{R}^{k_L \times K}$.
\end{defn}
For simplicity of calculations, we have set all bias terms to zero\footnote{In practice, we found that setting the bias to zero does not significantly impact results on image classification tasks such as MNIST and CIFAR-10.}. Also, as pointed out by \cite{ferencblog}, a bias-less network with homogeneous coordinates (in the first layer) can be nearly as powerful as one with biases in terms of the functions it can model. We also assume throughout the paper that 
		$\sigma(z) = \sigma'(z) z$
for all the activation functions, which includes ReLU $\sigma(z)=\max\{0,z\}$, ``leaky'' ReLU $\sigma(z)=\max\{\alpha z,z\}$, and linear activations as special cases. 

To make the exposition of the structural results concise, we define the following intermediate functions. The output value of the $t$-th layer hidden node is denoted as $O^t(x) \in \mathbb{R}^{k_t}$, and the corresponding input value as $N^{t}(x) \in \mathbb{R}^{k_t}$, with $O^t(x) = \sigma_{t} (N^{t}(x))$. By definition, $O^{0}(x) = x^T \in \mathbb{R}^p$, and the final output $O^{L+1}(x) = f_\theta(x) \in \mathbb{R}^K$. The subscript $i$ on $N^t_i, O^t_i$ denotes the $i$-th coordinate of the respective vector. 

Given a loss function $\ell(\cdot, \cdot)$, the statistical learning problem can be phrased as optimizing the unobserved population loss: 
$
	L(\theta) := \E_{(X, Y)\sim \mathcal{P}} \ell(f_{\theta}(X), Y)
$
based on i.i.d. samples $\{ (X_i, Y_i) \}_{i=1}^N$ from the unknown joint distribution $\mathcal{P}$. The unregularized empirical objective function is denoted by 
$
	\widehat{L}(\theta) := \widehat{\E}  \ell(f_{\theta}(X), Y) = \frac{1}{N}\sum_{i=1}^N \ell(f_{\theta}(X_i), Y_i) \enspace .
$

We first establish the following structural result for neural networks. 
It will be clear in the later sections that the lemma is motivated by the study of the Fisher-Rao norm, formally defined in Definition \ref{def:fr} below, and by information geometry. For the moment, however, let us provide a different viewpoint. For linear functions $f_\theta(x) = \langle \theta,x\rangle$, we clearly have that $\langle \partial f/\partial \theta, \theta \rangle = f_\theta(x)$. Remarkably, a direct analogue of this simple statement holds for neural networks, even if over-parametrized.

\begin{lem}[Structure in Gradient]
	\label{lem:induction}
	Given a single data input $x \in \mathbb{R}^p$, consider the feedforward neural network in Definition~\ref{def:mlp} with activations satisfying $\sigma(z) = \sigma'(z) z$. 
	Then for any $0\leq t \leq  s \leq L$, one has the identity
	$
		\sum_{i \in [k_t], j \in [k_{t+1}]} \frac{\partial O^{s+1}}{\partial W^{t}_{ij}} W^{t}_{ij} = O^{s+1}(x)
	$.
	In addition, it holds that
	$$
		\sum_{\substack{i \in [k_t], j \in [k_{t+1}], \\ 0\leq t \leq L}} \frac{\partial O^{L+1}}{\partial W^{t}_{ij}} W^{t}_{ij} = (L+1) O^{L+1}(x) \enspace .
	$$
\end{lem}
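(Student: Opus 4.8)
The plan is to establish the vector identity $\sum_{i\in[k_t],\,j\in[k_{t+1}]} \frac{\partial O^{s+1}}{\partial W^t_{ij}} W^t_{ij} = O^{s+1}(x)$ for every $0\le t\le s\le L$ by induction on $s$ with $t$ held fixed (base case $s=t$), and then to obtain the displayed equation as an immediate corollary: setting $s=L$ and summing over $t=0,\dots,L$ contributes the prefactor $L+1$. Throughout I adopt the convention $k_0=p$, $k_{L+1}=K$ so that $W^t\in\mathbb{R}^{k_t\times k_{t+1}}$ uniformly, and I use the recursion $N^{t+1}=O^t W^t$, $O^{t+1}=\sigma_{t+1}(N^{t+1})$. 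The conceptual reason the identity holds is homogeneity: rescaling $W^t\mapsto\lambda W^t$ rescales $N^{t+1}$ by $\lambda$, and since $\sigma(z)=\sigma'(z)z$ forces $\sigma$ to be positively homogeneous of degree one, every downstream $O^{s+1}$ ($s\ge t$) is rescaled by $\lambda$; Euler's relation for degree-one homogeneous functions then predicts $\sum_{ij}(\partial O^{s+1}/\partial W^t_{ij})W^t_{ij}=O^{s+1}$. I would nonetheless run the direct inductive computation, because it invokes the pointwise relation $\sigma(z)=\sigma'(z)z$ and cleanly sidesteps the rectifier's non-differentiability at the origin.

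For the base case $s=t$, I would argue coordinatewise in $m$: since $O^t$ does not depend on $W^t$, the chain rule gives $\partial O^{t+1}_m/\partial W^t_{ij}=\sigma'_{t+1}(N^{t+1}_m)\,O^t_i\,\delta_{jm}$; multiplying by $W^t_{ij}$ and summing collapses the Kronecker delta to $\sigma'_{t+1}(N^{t+1}_m)\sum_i O^t_i W^t_{im}=\sigma'_{t+1}(N^{t+1}_m)N^{t+1}_m$, which equals $O^{t+1}_m$ by exactly the relation $\sigma(z)=\sigma'(z)z$.

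For the inductive step ($t<s$), I would write $N^{s+1}_m=\sum_l O^s_l W^s_{lm}$ and observe that, because $t<s$, the entries $W^s_{lm}$ carry no dependence on $W^t$, so the dependence propagates only through $O^s$: $\partial N^{s+1}_m/\partial W^t_{ij}=\sum_l (\partial O^s_l/\partial W^t_{ij})W^s_{lm}$. Multiplying by $W^t_{ij}$, summing over $i,j$, and interchanging sums, the bracketed inner factor $\sum_{ij}(\partial O^s_l/\partial W^t_{ij})W^t_{ij}$ is precisely $O^s_l$ by the induction hypothesis, which is the crux of the argument. The remainder telescopes as in the base case: $\sigma'_{s+1}(N^{s+1}_m)\sum_l W^s_{lm}O^s_l=\sigma'_{s+1}(N^{s+1}_m)N^{s+1}_m=O^{s+1}_m$. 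Taking $s=L$ and summing the resulting identity over $t=0,\dots,L$ then gives $(L+1)O^{L+1}(x)$.

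The main obstacle is bookkeeping rather than depth: one must carefully track that differentiating $O^{s+1}$ with respect to an \emph{earlier} layer's weights $W^t$ passes only through $O^s$ and not through $W^s$, so that the induction hypothesis applies verbatim to the bracketed inner sum, and one must check that the relation $\sigma(z)=\sigma'(z)z$ is invoked at the correct layer ($t+1$ in the base case, $s+1$ in the inductive step). Everything is a vector identity in $\mathbb{R}^{k_{s+1}}$ to be read coordinate by coordinate in $m$.
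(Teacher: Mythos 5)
Your proposal is correct and follows essentially the same route as the paper's own proof: a coordinatewise induction (yours on $s$ for fixed $t$, the paper's on the gap $s-t$, which is the same inductive structure), with the identical base-case computation $\sigma'(N^{t+1}_m)N^{t+1}_m = O^{t+1}_m$, the identical chain-rule step routing the derivative through $O^s$ only, and the final statement obtained by setting $s=L$ and summing over $t$. The Euler-homogeneity remark is a nice added piece of intuition, but the argument itself matches the paper's verbatim.
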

Lemma~\ref{lem:induction} reveals the structural constraints in the gradients of rectified networks. In particular, even though the gradients lie in an over-parametrized high-dimensional space, many equality constraints are induced by the network architecture. Before we unveil the surprising connection between Lemma~\ref{lem:induction} and the proposed Fisher-Rao norm, let us take a look at an immediate corollary of this result. The following corollary establishes a large-margin property of stationary points that separate the data. 

\begin{cor}[Large Margin Stationary Points]\label{cor:largemargin}
	Consider the binary classification problem with $\mathcal{Y} = \{-1, +1\}$, and a neural network where the output layer has only one unit. Choose the hinge loss $\ell(f, y) = \max\{0,1 - yf\}$. If a certain parameter $\theta$ satisfies two properties: (a) $\theta$ is a stationary point for $\widehat{L}(\theta)$: $\nabla_\theta \widehat{L}(\theta) = \mathbf{0}$; (b) $\theta$ separates the data: $Y_i f_{\theta}(X_i)>0 : \forall i\in[N]$,
	then it must be that $\theta$ is a large margin solution: $Y_i f_{\theta}(X_i)\geq 1 : \forall i\in[N]$.
	The same result holds for the population criteria $L(\theta)$, in which case if condition $(b)$ holds $\mathbb{P}(Yf_{\theta}(X) >0) = 1$, then $\mathbb{P}(Yf_{\theta}(X) \geq 1) = 1$.
\end{cor}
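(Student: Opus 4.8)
The plan is to use the second identity of Lemma~\ref{lem:induction}, which for a single-output network ($O^{L+1} = f_\theta$) reads $\langle \nabla_\theta f_\theta(x), \theta\rangle = (L+1)\, f_\theta(x)$, the inner product being taken over the full flattened weight vector $\theta$. First I would differentiate the empirical objective by the chain rule to get $\nabla_\theta \widehat{L}(\theta) = \frac{1}{N}\sum_i \partial_f \ell(f_\theta(X_i), Y_i)\, \nabla_\theta f_\theta(X_i)$, and then take the Euclidean inner product of the stationarity equation $\nabla_\theta \widehat{L}(\theta) = \mathbf{0}$ with $\theta$ itself. Applying the identity to each term collapses $\langle \nabla_\theta f_\theta(X_i), \theta\rangle$ into $(L+1)\, f_\theta(X_i)$, and after dividing out the nonzero factor $(L+1)/N$ we arrive at the scalar relation $\sum_i \partial_f \ell(f_\theta(X_i), Y_i)\, f_\theta(X_i) = 0$.

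The second step is to insert the hinge-loss derivative. Away from the kink, $\partial_f \ell(f, y) = -y\,\mathbf{1}[yf < 1]$, so the relation becomes $\sum_i Y_i f_\theta(X_i)\,\mathbf{1}[Y_i f_\theta(X_i) < 1] = 0$. Now I would invoke the separation hypothesis (b): since $Y_i f_\theta(X_i) > 0$ for every $i$, each summand is nonnegative, and it is strictly positive precisely when $0 < Y_i f_\theta(X_i) < 1$. A sum of nonnegative terms vanishes only if every term vanishes, which forces the indicator to be zero for all $i$, i.e.\ $Y_i f_\theta(X_i) \geq 1$. This is exactly the large-margin conclusion. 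For the population statement I would repeat the argument verbatim with $\E$ in place of the empirical average, obtaining $\E[\,Y f_\theta(X)\,\mathbf{1}[Y f_\theta(X) < 1]\,] = 0$; because $\mathbb{P}(Y f_\theta(X) > 0) = 1$ makes the integrand nonnegative almost surely, its expectation vanishes only if $Y f_\theta(X) \geq 1$ almost surely.

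The one delicate point is the nondifferentiability of the hinge loss at $yf = 1$, where $\partial_f \ell$ is set-valued. I expect this to be something to handle carefully rather than a genuine obstacle: data points sitting exactly at the margin already satisfy the desired conclusion $Y_i f_\theta(X_i) \geq 1$, so the argument can be run either at parameters $\theta$ for which no sample lies on the kink (where $\widehat{L}$ is classically differentiable) or with subgradients, noting the conclusion is insensitive to the boundary case. The conceptual content is carried entirely by Lemma~\ref{lem:induction}: it converts the first-order optimality condition, which lives in the high-dimensional overparametrized weight space, into a single scalar balance equation in the margins $Y_i f_\theta(X_i)$, after which the sign structure of the hinge loss finishes the proof.
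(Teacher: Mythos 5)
Your proposal is correct and follows essentially the same route as the paper's proof: both pair the stationarity condition with $\theta$ via Lemma~\ref{lem:induction} to obtain the scalar balance $\widehat{\E}\left[-Y f_\theta(X)\,\mathbf{1}_{Y f_\theta(X) < 1}\right] = 0$, and then conclude from the sign structure under the separation hypothesis (with the population case handled verbatim). Your explicit treatment of the kink at $yf=1$ is a small refinement over the paper, which simply fixes the one-sided derivative value $\partial \ell/\partial f = 0$ for $yf \geq 1$, but it does not change the argument.
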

Granted, the above corollary can be proved from first principles without the use of Lemma~\ref{lem:induction}, but the proof reveals a quantitative statement about stationary points along arbitrary directions $\theta$.

The following corollary is another direct consequence of Lemma~\ref{lem:induction}.
\begin{cor}[Stationary Points for Deep Linear Networks]\label{cor:stationary}
	Consider linear neural networks with $\sigma(x) = x$ and square loss function. Then all stationary points that satisfy $\nabla_{\theta} \widehat{L}(\theta) =  \nabla_{\theta} \widehat{\E}  \left[ \frac{1}{2}(f_{\theta}(X)- Y)^2 \right] = 0 $ must also satisfy
	$\langle w(\theta), \mathbf{X}^T \mathbf{X} w(\theta) -  \mathbf{X}^T \mathbf{Y}  \rangle = 0$,
	where $w(\theta) = \prod_{t=0}^{L} W^t \in \mathbb{R}^p$, $\mathbf{X} \in \mathbb{R}^{N \times p}$ and $\mathbf{Y} \in \mathbb{R}^{N}$ are the data matrices.
\end{cor}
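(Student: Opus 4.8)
The plan is to contract the stationarity condition against the parameter vector $\theta$ itself and then invoke the Euler-type identity of Lemma~\ref{lem:induction}. First I would record that for a linear network ($\sigma(x)=x$) with a single output unit the prediction collapses to the linear form $f_\theta(x) = x^T w(\theta)$ with $w(\theta) = \prod_{t=0}^L W^t \in \mathbb{R}^p$, so that the scalar $O^{L+1}(x)$ equals $f_\theta(x)$. The empirical objective then reads $\widehat{L}(\theta) = \frac{1}{2N}\sum_{n=1}^N (X_n^T w(\theta) - Y_n)^2$, and the hypothesis $\nabla_\theta \widehat{L}(\theta)=0$ asserts that \emph{every} partial derivative $\partial \widehat{L}/\partial W^t_{ij}$ vanishes.

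Rather than solve this large system of equations directly, I would form the single scalar obtained by contracting the gradient with the parameters, namely $\sum_{t,i,j} \frac{\partial \widehat{L}}{\partial W^t_{ij}} W^t_{ij} = 0$, which holds trivially because each summand is zero. Expanding via the chain rule on the square loss and interchanging the (finite) summations over parameters and over data points gives
$$
\frac{1}{N}\sum_{n=1}^N \bigl(f_\theta(X_n) - Y_n\bigr) \left( \sum_{i\in[k_t],\,j\in[k_{t+1}],\,0\le t\le L} \frac{\partial f_\theta(X_n)}{\partial W^t_{ij}} W^t_{ij} \right) = 0 \enspace .
$$
The key observation is that the inner parenthesized sum is precisely the left-hand side of the second identity in Lemma~\ref{lem:induction} evaluated at the input $X_n$, so it equals $(L+1)\,f_\theta(X_n)$.

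Substituting this and cancelling the positive constant $(L+1)/N$ leaves $\sum_{n=1}^N (f_\theta(X_n) - Y_n)\,f_\theta(X_n) = 0$. Writing $f_\theta(X_n) = X_n^T w(\theta)$ and collecting the $N$ terms into the data matrices, this is exactly $(\mathbf{X} w(\theta) - \mathbf{Y})^T \mathbf{X} w(\theta) = \langle w(\theta),\, \mathbf{X}^T \mathbf{X} w(\theta) - \mathbf{X}^T \mathbf{Y}\rangle = 0$, the claimed relation. I do not expect a genuine obstacle here: the whole content of the argument is recognizing that the productive linear combination of the many stationarity equations is the contraction with $\theta$, which collapses them into a single scalar identity through the homogeneity structure captured by Lemma~\ref{lem:induction}. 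The only point requiring mild care is the legitimacy of exchanging the summation over parameters with that over data points, which is immediate since both index sets are finite.
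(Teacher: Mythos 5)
Your proposal is correct and takes essentially the same route as the paper: the paper's proof is exactly the one-line contraction $0 = \theta^T \nabla_\theta \widehat{L}(\theta) = (L+1)\,\widehat{\E}\left[(Y - X^T w(\theta))\, X^T w(\theta)\right]$, i.e., pairing the gradient with $\theta$ and invoking the second identity of Lemma~\ref{lem:induction}. Your write-up merely spells out the chain-rule expansion and the (trivially valid) finite interchange of sums that the paper leaves implicit.
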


\begin{remark}
	\rm
	This simple Lemma is not quite asserting that all stationary points are global optima, since global optima satisfy
	$\mathbf{X}^T \mathbf{X} w(\theta) -  \mathbf{X}^T \mathbf{Y} = \mathbf{0}$,
	while we only proved that stationary points satisfy
	$\langle w(\theta), \mathbf{X}^T \mathbf{X} w(\theta) -  \mathbf{X}^T \mathbf{Y}  \rangle = 0$.
\end{remark}

\begin{remark}
\rm
Recursively applying Lemma \ref{lem:induction} yields a tower of derivative constraints. For instance $\langle \theta, \Hess_\theta (f_\theta) \theta \rangle = L(L+1)f_\theta$.
\end{remark}

\section{Fisher-Rao Norm and Geometry}
\label{sec:fr}

In this section, we propose a new notion of complexity of neural networks that can be motivated by geometrical invariance considerations, specifically the Fisher-Rao metric of information geometry. 
After describing geometrical motivation in Section~\ref{sec:info-geo} we define the Fisher-Rao norm and describe some of its properties. Detailed comparison with the known norm-based capacity measures and generalization results are delayed to Section~\ref{sec:norm-genrl}.  

\subsection{Motivation and invariance}
\label{sec:info-geo}
In this section, we will provide the original intuition and motivation for our proposed Fisher-Rao norm from the viewpoint of geometric invariance.

\noindent \textbf{Information geometry and the Fisher-Rao metric} \quad
Information geometry provides a window into geometric invariances when we adopt a generative framework where the data generating process belongs to a parametric family $\mathcal{P} \in \{ \mathcal{P}_\theta \, | \, \theta \in \Theta_L \}$ indexed by the parameters of the neural network architecture. The Fisher-Rao metric on $\{ \mathcal{P}_\theta\}$ is defined in terms of a local inner product for each value of $\theta \in \Theta_L$ as follows. For each $\alpha, \beta \in \mathbb{R}^d$ define the corresponding tangent vectors $\bar{\alpha} := \left. {\rm d}p_{\theta+ t\alpha} / {\rm d}t \right|_{t=0}$, $\bar{\beta} := \left. {\rm d}p_{\theta+ t\beta} / {\rm d}t\right|_{t=0}$. Then for all $\theta \in \Theta_L$ and $\alpha,\beta \in \mathbb{R}^d$ we define the local inner product
$
\langle \bar{\alpha}, \bar{\beta} \rangle_{p_{\theta}} :=
\int
\frac{\bar{\alpha}}{p_{\theta}}
\,
\frac{\bar{\beta}}{p_{\theta}}
\,
p_{\theta}
$,
The above inner product extends to a Riemannian metric on the space of positive densities called the Fisher-Rao metric\footnote{\cite{bauer2016uniqueness} showed that it is essentially the the unique metric invariant under the diffeomorphism group.}. The relationship between the Fisher-Rao metric and the Fisher information matrix $I_\theta$ in statistics literature follows from the identity
$
	\langle \bar{\alpha}, \bar{\beta} \rangle_{p_{\theta}} = \langle \alpha, I_\theta \beta \rangle
$.
Notice that the Fisher information matrix induces a \emph{semi}-inner product $(\alpha,\beta) \mapsto \langle \alpha, I_{\theta}\beta \rangle$ unlike the Fisher-Rao metric which is non-degenerate\footnote{The nullspace of $I_{\theta}$ maps to $\mathbf{0}$ under $\alpha \mapsto \left. {\rm d}p_{\theta+ t\alpha} / {\rm d}t \right|_{t=0}$.}.
 If we make the additional modeling assumption that $p_\theta(x,y) = p(x)p_\theta(y \, | x)$ then the Fisher information becomes
 $
 	I_{\theta} = \mathbb{E}_{(X,Y) \sim \mathcal{P}_\theta} \left[ \nabla_\theta \log p_\theta( Y \, | \, X) \otimes \nabla_\theta \log p_\theta( Y \, | \, X) \right]
 $.
If we now identify our loss function as $\ell(f_\theta(x),y) = - \log p_\theta(y \, | \,x)$ then the Fisher-Rao metric coincides with the Fisher-Rao norm when $\alpha = \beta = \theta$. In fact, our Fisher-Rao norm encompasses the Fisher-Rao metric and generalizes it to the case when the model is misspecified $\mathcal{P} \not\in \{\mathcal{P}_\theta\}$.

\noindent \textbf{Flatness} \quad
Having identified the geometric origin of Fisher-Rao norm, let us study the implications for generalization of flat minima. \cite{dinh2017sharp} argued by way of counter-example that the existing measures of flatness are inadequate for explaining the generalization capability of multi-layer neural networks. Specifically, by utilizing the invariance property of multi-layer rectified networks under non-negative nodewise rescalings, they proved that the Hessian eigenvalues of the loss function can be made arbitrarily large, thereby weakening the connection between flat minima and generalization. They also identified a more general problem which afflicts Hessian-based measures of generalization for any network architecture and activation function: the Hessian is sensitive to network parametrization whereas generalization should be invariant under general coordinate transformations.  
Our proposal can be motivated from the following fact
which relates flatness to geometry (under appropriate regularity conditions)
$
\mathbb{E}_{(X,Y) \sim \mathcal{P}_\theta} \langle \theta, \Hess_\theta \left[ \ell(f_\theta(X), Y)\right] \, \theta \rangle = \Vert \theta \Vert_{\rm fr}^2
$.
In other words, the Fisher-Rao norm evades the node-wise rescaling issue because it is exactly invariant under linear re-parametrizations. The Fisher-Rao norm moreover possesses an ``infinitesimal invariance'' property under non-linear coordinate transformations, which can be seen by passing to the infinitesimal form where non-linear coordinate invariance is realized exactly by the infinitesimal line element,
$
 	{\rm d}s^2 = \sum_{i,j \in [d]}[I_{\theta}]_{ij} \, {\rm d}\theta_i {\rm d}\theta_j
 $.
Comparing with $\Vert \theta \Vert_{\rm fr}$ reveals the geometric interpretation of the Fisher-Rao norm as the approximate geodesic distance from the origin. It is important to realize that our definition of flatness differs from \citep{dinh2017sharp} who employed the Hessian loss $\Hess_\theta \big[\widehat{L}(\theta)\big]$. Unlike the Fisher-Rao norm, the norm induced by the Hessian loss does not enjoy the infinitesimal invariance property (it only holds at critical points).

\noindent \textbf{Natural gradient} \quad
There exists a close relationship between the Fisher-Rao norm and the natural gradient. In particular, the natural gradient descent is simply the steepest descent direction induced by the Fisher-Rao geometry of $\{\mathcal{P}_\theta\}$. Indeed, the natural gradient can be expressed as a semi-norm-penalized iterative optimization scheme as follows,
\begin{equation*}
	\label{e:ng}
    \theta_{t+1} = \argmin_{\theta \in \mathbb{R}^d} \left[\langle \theta-\theta_t, \nabla \widehat{L}(\theta_t) \rangle + \frac{1}{2\eta_t} \Vert \theta - \theta_t \Vert^2_{\mathbf{I}(\theta_t)} \right] \enspace ,
\end{equation*}
where the positive semi-definite matrix $\mathbf{I}(\theta_t)$ changes with different $t$. We emphasize that in addition to the invariance property of the natural gradient under re-parametrizations, there exists an ``approximate invariance'' property under over-parametrization, which is not satisfied for the classic gradient descent. The formal statement and its proof are deferred to Sec. \ref{sec:inv_nat_grad}.
The invariance property is desirable: in multi-layer ReLU networks, there are many equivalent re-parametrizations of the problem, such as nodewise rescalings, which may slow down the optimization process. The advantage of natural gradient is also illustrated empirically in Section~\ref{sec:exp}.

\subsection{An analytical formula}
\begin{defn}\label{def:fr}
The Fisher-Rao norm for a parameter $\theta$ is defined as the quadratic form $\| \theta \|_{\rm fr}^2 := \langle \theta, \mathbf{I}(\theta) \theta\rangle$
where $\mathbf{I}(\theta) = \E[\nabla_\theta \ell(f_\theta(X), Y) \otimes \nabla_\theta \ell(f_\theta(X), Y) ]$.
\end{defn}
The underlying distribution for the expectation in the above definition has been left ambiguous because it will be useful to specialize to different distributions depending on the context. Even though we call the above quantity the ``Fisher-Rao norm,'' it should be noted that it does not satisfy the triangle inequality. 
The following Theorem unveils a surprising identity for the Fisher-Rao norm.  

\begin{thm}[Fisher-Rao norm]
	\label{thm:FR}
	Assume the loss function $\ell(\cdot, \cdot)$ is smooth in the first argument. 
	The following identity holds for a feedforward neural network (Definition~\ref{def:mlp}) with $L$ hidden layers and activations satisfying $\sigma(z) = \sigma'(z) z$: 
	\begin{align}
		\| \theta \|_{\rm fr}^2 = (L+1)^2 \E 
		\left \langle \frac{\partial \ell(f_\theta(X), Y)}{\partial f_\theta(X) }, f_{\theta}(X)  \right \rangle^2 \enspace .
	\end{align}
\end{thm}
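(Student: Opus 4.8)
The plan is to reduce the quadratic form defining the Fisher-Rao norm to the expectation of a single scalar squared, and then to recognize that this scalar is exactly the quantity controlled by the gradient identity of Lemma~\ref{lem:induction}. First I would expand the definition $\|\theta\|_{\rm fr}^2 = \langle \theta, \mathbf{I}(\theta)\theta\rangle$ using the outer-product form $\mathbf{I}(\theta) = \E[\nabla_\theta \ell \otimes \nabla_\theta \ell]$. Because $\langle \theta, (u\otimes u)\theta\rangle = \langle \theta, u\rangle^2$, the quadratic form collapses to
\[
	\|\theta\|_{\rm fr}^2 = \E\big[\, \langle \theta, \nabla_\theta \ell(f_\theta(X),Y)\rangle^2 \,\big] \enspace ,
\]
so the whole problem reduces to understanding the Euler-type contraction $\langle \theta, \nabla_\theta \ell\rangle$, i.e. the directional derivative of the loss in the direction $\theta$ itself.

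Next I would apply the chain rule, writing $\theta = (W^t_{ij})$ and $f_\theta = O^{L+1}$, and using that $\ell$ depends on $\theta$ only through the $K$-dimensional output $f_\theta(X)$. This gives
\[
	\langle \theta, \nabla_\theta \ell\rangle = \sum_{k\in[K]} \frac{\partial \ell}{\partial f_k} \sum_{\substack{i\in[k_t],\, j\in[k_{t+1}],\\ 0\le t\le L}} \frac{\partial O^{L+1}_k}{\partial W^t_{ij}} W^t_{ij} \enspace .
\]
The inner sum is precisely the left-hand side of the second identity in Lemma~\ref{lem:induction}, read off coordinatewise for the vector output $O^{L+1}$.

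The crux is then to invoke Lemma~\ref{lem:induction}, which yields $\sum_{i,j,t} (\partial O^{L+1}_k/\partial W^t_{ij})\, W^t_{ij} = (L+1)\,O^{L+1}_k(X)$ for each output coordinate $k$. Substituting this back produces the clean pointwise identity $\langle \theta, \nabla_\theta \ell\rangle = (L+1)\,\langle \partial \ell/\partial f, f_\theta(X)\rangle$. Squaring both sides, pulling out the constant factor $(L+1)^2$, and taking the expectation then delivers the claimed formula directly.

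I do not anticipate a genuine obstacle, since Lemma~\ref{lem:induction} does the heavy lifting; the one point requiring care is the vector-valued nature of the output. The lemma must be applied separately to each of the $K$ coordinates of $O^{L+1}$ before contracting against $\partial \ell/\partial f$, rather than to the scalar loss directly. The smoothness hypothesis on $\ell$ in its first argument is exactly what licenses the chain rule and guarantees $\partial \ell/\partial f$ exists, while the homogeneity assumption $\sigma(z) = \sigma'(z)z$ enters only through the lemma and need not be re-derived here.
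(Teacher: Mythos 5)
Your proposal is correct and follows exactly the route the paper intends: the paper gives no separate proof of Theorem~\ref{thm:FR}, stating only that it ``relies mainly on'' Lemma~\ref{lem:induction}, and your argument---collapsing $\langle \theta, \mathbf{I}(\theta)\theta\rangle$ to $\E\,\langle\theta,\nabla_\theta\ell\rangle^2$, applying the chain rule through $f_\theta$, and invoking the lemma coordinatewise---is precisely that intended derivation. Your attention to the vector-valued output is also consistent with the appendix, where Lemma~\ref{lem:induction} is in fact proved per output coordinate $l\in[k_{s+1}]$.
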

The proof of the Theorem relies mainly on the geometric Lemma~\ref{lem:induction} that describes the gradient structure of multi-layer rectified networks. 
\begin{remark}
\rm 
For absolute-value loss, the FR norm becomes proportional to the function space norm
$
	\| \theta \|_{\rm fr} = (L+1) \left(\mathbb{E} \, f_\theta(X)^2\right)^{1/2}
$.
Similarly for squared loss with residual modeled as\footnote{It also holds for other appropriate losses when the model residual follows from generalized linear models.} $Y|X \sim N(f_\theta(X), \sigma^2)$.
\end{remark}

Before illustrating how the explicit formula in Theorem~\ref{thm:FR} can be viewed as a unified ``umbrella'' for many of the known norm-based capacity measures, let us point out one simple invariance property of the Fisher-Rao norm, which follows as a direct consequence of Thm.~\ref{thm:FR}. This property is not satisfied for $\ell_2$ norm, spectral norm, path norm, or group norm.  
\begin{cor}[Invariance]
	If there are two parameters $\theta_1, \theta_2 \in \Theta_L$ such that
	they are equivalent, in the sense that $f_{\theta_1} = f_{\theta_2}$, then their Fisher-Rao norms are equal, i.e.,
	$
		\| \theta_1 \|_{\rm fr} = \| \theta_2 \|_{\rm fr}
	$.
\end{cor}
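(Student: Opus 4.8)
The plan is to read off the invariance directly from the analytical formula of Theorem~\ref{thm:FR}; this is precisely the feature that distinguishes the Fisher-Rao norm from the other norm-based measures. The starting point is the identity
$$
\| \theta \|_{\rm fr}^2 = (L+1)^2 \E \left\langle \frac{\partial \ell(f_\theta(X), Y)}{\partial f_\theta(X)}, f_{\theta}(X) \right\rangle^2 \enspace .
$$

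The key observation I would make is that the right-hand side depends on $\theta$ \emph{only} through the realized function $f_\theta$, never through the raw parameters themselves. Indeed, the integrand is built out of two ingredients: the evaluation $f_\theta(X)$, and the gradient of the loss in its first slot, $\partial \ell(f_\theta(X), Y)/\partial f_\theta(X)$, which is itself a fixed function of the pair $(f_\theta(X), Y)$. Both ingredients are determined pointwise by the values $f_\theta(X)$, so the entire expression is a functional of $f_\theta$ alone.

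First I would invoke the hypothesis $f_{\theta_1} = f_{\theta_2}$ to conclude $f_{\theta_1}(X) = f_{\theta_2}(X)$ for every $X$, and hence that the loss-gradient terms agree as well. This makes the two integrands identical as random variables, so their expectations coincide. Since $\theta_1$ and $\theta_2$ both lie in $\Theta_L$, the common depth $L$ ensures the prefactor $(L+1)^2$ is the same in both cases. Taking square roots then yields $\| \theta_1 \|_{\rm fr} = \| \theta_2 \|_{\rm fr}$.

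There is essentially no obstacle here once Theorem~\ref{thm:FR} is in hand: the entire content of the corollary is that the formula's dependence on $\theta$ factors through $f_\theta$. The only role played by the smoothness assumption of the theorem is to guarantee that $\partial \ell/\partial f$ is well defined, and this has already been granted. It is worth emphasizing in the write-up that this clean argument would fail for the $\ell_2$, spectral, path, or group norms precisely because no analogous function-space formula is available for them, so their values genuinely differ across equivalent parametrizations.
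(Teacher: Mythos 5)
Your proposal is correct and is exactly the paper's argument: the paper states the corollary as a ``direct consequence'' of Theorem~\ref{thm:FR}, relying precisely on your observation that the right-hand side of the identity depends on $\theta$ only through the realized function $f_\theta$ (with the prefactor $(L+1)^2$ fixed by the common architecture $\Theta_L$). The only minor point worth a parenthetical remark is that when the expectation is taken over the model distribution $\mathcal{P}_\theta$ rather than the data distribution, the invariance still holds because $p_\theta(y\,|\,x)$ is itself determined by $f_\theta$, so that dependence also factors through the function.
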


\subsection{Norms and geometry}

In this section we will employ Theorem~\ref{thm:FR} to reveal the relationship among different norms and their corresponding geometries. Norm-based capacity control is an active field of research for understanding why deep learning generalizes well,  including $\ell_2$ norm (weight decay) in \citep{krogh1992simple, krizhevsky2012imagenet}, path norm in \citep{neyshabur2015path}, group-norm in \citep{neyshabur2015norm}, and spectral norm in \citep{bartlett2017spectrally}. All these norms are closely related to the Fisher-Rao norm, despite the fact that they capture distinct inductive biases and different geometries. 

For simplicity, we will showcase the derivation with the absolute loss function $\ell(f, y) = |f - y|$ and when the output layer has only one node ($k_{L+1} = 1$). The argument can be readily adopted to the general setting. We will show that the Fisher-Rao norm serves as a lower bound for all the norms considered in the literature, with some pre-factor whose meaning will be clear in Section~\ref{sec:norm-compare}. In addition, the Fisher-Rao norm enjoys an interesting umbrella property: by considering a more constrained geometry (motivated from algebraic norm comparison inequalities) the Fisher-Rao norm motivates new norm-based capacity control methods.

The main theorem we will prove is informally stated as 
\begin{thm}[Norm comparison in Section~\ref{sec:norm-compare}, informal]\label{thm:normcomp}
	Denoting $\vvvert \cdot \vvvert$ as any one of: (1) spectral norm, (2) matrix induced norm, (3) group norm, or (4) path norm, we have
	$
		\frac{1}{L+1} \| \theta \|_{\rm fr} \leq  \vvvert \theta \vvvert
	$,
	for any $\theta \in \Theta_L = \{W^0, W^1, \ldots, W^{L}\}$. The specific norms (1)-(4) are formally introduced in Definitions~\ref{def:norm-spectral}-\ref{def:norm-induced}.
\end{thm}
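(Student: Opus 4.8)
The plan is to reduce everything, via the analytical formula of Theorem~\ref{thm:FR}, to a purely deterministic bound on the function value, and then to run the classical ``Lipschitz / sum-over-paths'' machinery. First I would specialize Theorem~\ref{thm:FR} to the advertised setting of absolute loss $\ell(f,y)=|f-y|$ and a single output node. Here $\partial \ell/\partial f = \operatorname{sign}(f-y)\in\{-1,+1\}$, so the inner product inside the expectation is simply $\operatorname{sign}(f_\theta(X)-Y)\,f_\theta(X)$, whose square equals $f_\theta(X)^2$ almost everywhere. Consequently $\|\theta\|_{\rm fr}^2 = (L+1)^2\,\E f_\theta(X)^2$, i.e. $\tfrac{1}{L+1}\|\theta\|_{\rm fr} = \big(\E f_\theta(X)^2\big)^{1/2}$. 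The theorem then amounts to showing $\big(\E f_\theta(X)^2\big)^{1/2} \le \vvvert \theta \vvvert$ for each of the four norms, where each $\vvvert\cdot\vvvert$ (Definitions~\ref{def:norm-spectral}--\ref{def:norm-induced}) already carries the appropriate data-dependent factor such as $\big(\E\|X\|^2\big)^{1/2}$.

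Second, I would extract a single pointwise representation of the network output that feeds all four comparisons. Using the homogeneity hypothesis $\sigma(z)=\sigma'(z)z$, each activation acts as multiplication by its own input-dependent derivative, so collecting the gates into diagonal matrices $D^t(x)=\diag\!\big(\sigma_t'(N^t(x))\big)$ I can write $f_\theta(x) = x^\top W^0 D^1 W^1 D^2 \cdots D^L W^L$, with every gate satisfying $|D^t_{ii}|\le 1$ (entries in $\{0,1\}$ for ReLU and in $[\alpha,1]$ for leaky ReLU). From this one identity the four bounds split off: (i) bounding $\|D^t\|\le 1$ in operator norm gives $|f_\theta(x)|\le \|x\|_2 \prod_t \|W^t\|_{\sigma}$, the spectral comparison; (ii) the matrix-induced and group comparisons follow by replacing the spectral norm with the larger induced $(p,q)$ or group norm of each factor; (iii) fully expanding the matrix product into a sum over input-to-output paths and bounding each gate factor by $1$ gives $|f_\theta(x)|\le \|x\|_\infty \sum_{\mathrm{paths}} \prod_{e\in\mathrm{path}} |W_e|$, the path comparison. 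Taking $L^2(\mathcal{P})$ norms and absorbing the $\|x\|$ factor into the prefactor of each norm then yields $\big(\E f_\theta^2\big)^{1/2} \le \vvvert\theta\vvvert$.

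The main obstacle I anticipate is not the probabilistic side---Theorem~\ref{thm:FR} has already collapsed the Fisher information into $\E f_\theta^2$---but the bookkeeping required to derive all four inequalities from one representation with a uniform constant. Concretely, I expect the delicate points to be verifying that the gate matrices are genuinely norm-contracting under each relevant matrix norm (so that $\|D^t\|\le 1$ in the right sense for the spectral, induced, and group cases), and routing the path-norm bound through the correct H\"older-type algebraic norm-comparison inequalities so that the product of per-layer group norms dominates the sum over paths. Matching the prefactor exactly to $1/(L+1)$---that is, confirming that the extra $(L+1)$ produced by the Fisher-Rao formula is precisely the factor that the definitions of the competing norms omit---is the final step that ties the statement together, and it is where I would be most careful to check constants rather than treat the bound as merely order-correct.
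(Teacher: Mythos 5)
Your opening reduction is exactly the paper's: specialize Theorem~\ref{thm:FR} to absolute loss to get $\tfrac{1}{L+1}\|\theta\|_{\rm fr} = \big(\E f_\theta(X)^2\big)^{1/2}$, write $f_\theta(x) = x^T W^0 D^1(x) W^1 \cdots D^L(x) W^L D^{L+1}(x)$ (note you dropped the final gate $D^{L+1}$, which the paper's definitions include in the product $\prod_{t=1}^{L+1}$), and peel layer by layer. The genuine gap is in how you treat the gate matrices. The norms $\vvvert \cdot \vvvert$ of Definitions~\ref{def:norm-spectral}--\ref{def:norm-induced} are \emph{data-dependent}: they retain the factors $\prod_{t=1}^{L+1}\| D^t(X)\|^2$ (in the appropriate induced norm) \emph{inside} the expectation, jointly with $\|X\|^2$. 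Your plan bounds $\|D^t\|\leq 1$ pointwise and discards the gates, which proves $\big(\E f_\theta^2\big)^{1/2} \leq \big(\E \|X\|^2\big)^{1/2}\,\|\theta\|_\sigma$ --- an inequality implied by, but strictly weaker than, the theorem's claim $\tfrac{1}{L+1}\|\theta\|_{\rm fr}\leq \vvvert\theta\vvvert_\sigma$, since $\vvvert\theta\vvvert_\sigma \leq \big(\E\|X\|^2\big)^{1/2}\|\theta\|_\sigma$. The paper's peeling argument pulls out only the deterministic weight norms and deliberately carries the gate norms along under the expectation, because the exact cancellation of this prefactor against the factors in the cited Rademacher bounds (Remarks~\ref{remark:spectral}--\ref{remark:path}) is the content of the theorem.

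Worse, for the group and induced-norm cases the contraction you flag as a ``delicate point'' in fact fails. The correct H\"older chain for the group norm is $|O^L W^L| \leq \|O^L\|_{p^*}\|W^L\|_{p,q}$ and then, for intermediate layers, $\|v^T M\|_q \leq \|v\|_{p^*}\|M\|_{p,q}$ (not $\|v^T M\|_{p^*}\leq \|v\|_{p^*}\|M\|_{p,q}$, which is false in general); after each weight layer you land in $\ell_q$, so the gate must map $\ell_q \to \ell_{p^*}$, and the relevant gate norm is $\|D^t(X)\|_{q\to p^*}$. For a diagonal $0/1$ matrix this can be as large as $k^{[1/p^*-1/q]_+}$, so no uniform constant exists, and your heuristic of ``replacing the spectral norm with the larger group norm of each factor'' does not go through (nor does the group norm dominate the spectral norm in general). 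This width-dependent factor $\big(k^{[1/p^*-1/q]_+}\big)^L$ is precisely what the paper's data-dependent prefactor retains and what cancels against Theorem~1 of Neyshabur et al.\ in Remark~\ref{remark:group}. To repair your argument: keep every gate norm inside the expectation rather than bounding it by one, and run the duality bookkeeping $\ell_{p^*}\to\ell_q\to\ell_{p^*}$ (resp.\ $\ell_{p_t}\to\ell_{p_{t+1}}$ for the chain norm) layer by layer, as in the paper's proofs. Your spectral and path-norm sketches are structurally sound modulo the same ``keep the gates'' correction.
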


The detailed proof of the above theorem will be the main focus of Section~\ref{sec:norm-compare}. Here we will give a sketch on how the results are proved.
For the absolute loss, one has $\big( \partial \ell(f_\theta(X), Y) / \partial f_\theta(X) \big)^2 = 1$ and therefore Theorem~\ref{thm:FR} simplifies to,
\begin{align}
	\label{eq:FR}
	\| \theta \|_{\rm fr}^2
	 = (L+1)^2 \E_{X \sim \mathcal{P}} \left[ v(\theta, X)^T XX^T v(\theta, X) \right] \enspace ,
\end{align}
where we have defined the product of matrices $v(\theta, x):= W^0 D^1(x) W^1 D^2(x) \cdots D^L(x) W^L D^{L+1}(x) \in \mathbb{R}^p$ and $D^t(x) = {\rm diag}[\sigma'(N^t(x))] \in \mathbb{R}^{k_t \times k_t}$, for $0<t \leq L+1$. The norm comparison results are thus established through a careful decomposition of the data-dependent vector $v(\theta, X)$, in distinct ways according to the comparing norm/geometry.

\section{Capacity Control and Generalization}
\label{sec:norm-genrl}

In this section, we discuss in full detail the questions of geometry, capacity measures, and generalization. First, let us define empirical \textit{Rademacher complexity} for the parameter space $\Theta_L$, conditioned on data $\{X_i : i\in [N] \}$, as 
	$
		\R_N(\Theta_L) = \E_{\epsilon} \sup_{\theta \in \Theta_L} \frac{1}{N} \sum_{i=1}^N \epsilon_i f_{\theta}(X_i) 
	$,
	where $\epsilon_i, i \in [N]$ are i.i.d. Rademacher random variables.

\subsection{Norm Comparison}
\label{sec:norm-compare}
Let us collect some definitions before stating each norm comparison result. For a vector $v$, the vector $\ell_p$ norm is denoted $\| v \|_p := \left( \sum_{i} |v_i|^p \right)^{1/p}$, $p>0$. For a matrix $M$, $\| M \|_{\sigma} := \max_{v \neq 0} \|v^T M \| / \| v \|$ denotes the spectral norm; $\| M \|_{p\rightarrow q} = \max_{v \neq 0} \|v^T M\|_q / \| v \|_p$ denotes the matrix induced norm, for $p, q\geq 1$; $\| M \|_{p, q} = \big[ \sum_{j} \big( \sum_{i} |M_{ij}|^p \big)^{q/p}\big]^{1/q}$ denotes the matrix group norm, for $p, q\geq 1$. 
Define the subset of parameters induced by the Fisher-Rao geometry,
$
	B_{\rm fr}(r) := \{ \theta \in \Theta_L : \| \theta \|_{\rm fr} \leq (L+1)r \}
$.
For any norm $\| \cdot \|$ defined on $\Theta_L$, let $B_{\| \cdot \|}(r) := \{\theta \in \Theta_L: \| \theta \|_{\| \cdot \|} \leq r \}$ denote the corresponding ball of radius $r$, centered at the origin.

We will consider the spectral norm $\| \theta \|_\sigma$, group norm $\| \theta \|_{p,q}$, matrix induced norm $\| \theta \|_{p \to q}$ and path norm $\| \pi(\theta) \|_q$, where $\| \theta \|_\sigma := \prod_{t=0}^{L} \| W^t \|_{\sigma}$, $\| \theta \|_{p,q} := \prod_{t=0}^{L} \| W^t \|_{p,q}$, $\| \theta \|_{p \to q} := \prod_{t=0}^{L} \| W^t \|_{p \to q}$, $\| \pi(\theta) \|_q := \big(\sum_{i_0, i_1, \ldots, i_L}\prod_{t=0}^L |W_{i_t i_{t+1}}^t|^q \big)^{1/q}$. In addition, for any chain $P = (p_0, p_1, \ldots, p_{L+1}), p_i > 0$ we define the chain of induced norm $\Vert \theta \Vert_P : =  \prod_{t=0}^{L} \| W^t \|_{p_t \rightarrow p_{t+1}}$.

\begin{defn}[Spectral norm]
	\label{def:norm-spectral}
	\rm
	Define the following data-dependent ``spectral norm'':
	\begin{align}\label{eq:norm-spectral}
		\vvvert \theta  \vvvert_{\sigma} & := \left[ \mathbb{E}\left( \| X \|^2  \prod_{t=1}^{L+1} \| D^{t}(X)\|^2_{\sigma} \right) \right]^{1/2} \| \theta \|_\sigma \enspace . 
	\end{align}
\end{defn}

\begin{remark}
	\rm
	\label{remark:spectral}
	Spectral norm as a capacity control has been considered in \citep{bartlett2017spectrally}. Theorem~\ref{thm:normcomp} shows that spectral norm serves as a more stringent constraint than Fisher-Rao norm. Let us provide an explanation of the pre-factor $\big[ \mathbb{E}\big( \| X \|^2  \prod_{t=1}^{L+1} \| D^{t}(X)\|^2_{\sigma} \big) \big]^{1/2}$ here. Applying Theorem~\ref{thm:normcomp} to \eqref{eq:norm-spectral}, with the expectation over the empirical measure $\widehat{\E} $, then, because $\| D^{t}(X)\|_{\sigma} \leq 1$, we obtain for $1/r = [\widehat{\E}\Vert X \Vert^2]^{1/2}$, that
	$
	B_{\| \cdot \|_\sigma}(r) \subset B_{\rm fr}(1)
	$.
	It follows from Theorem 1.1 in \citep{bartlett2017spectrally} that a subset of the $B_{\rm fr}(1)$ characterized by the \textit{spectral ball} of radius $r = [\widehat{\E}\Vert X \Vert^2]^{-1/2}$ enjoys the following upper bound on Rademacher complexity under mild conditions:
	$\R_N\left(B_{\Vert \cdot \Vert\sigma}(r) \right) \precsim {\rm Polylog} \,  / \sqrt{N}  \rightarrow 0$.
	Interestingly, the additional factor $[ \widehat{\E}  \| X \|^2 ]^{1/2}$ in Theorem 1.1 in \citep{bartlett2017spectrally} exactly cancels with our pre-factor in the norm comparison. 
	The above calculations show that a subset of $B_{\rm fr}(1)$, induced by the spectral norm geometry, has good generalization error. 
\end{remark}

\begin{defn}[Group norm]
	\label{def:norm-group}
	\rm
	Define the following data-dependent ``group norm'', for $p \geq 1, q>0$
	$$
		\vvvert \theta  \vvvert_{p,q} := \left[ \mathbb{E}\left( \| X \|_{p^*}^2  \prod_{t=1}^{L+1} \| D^{t}(X)\|_{q \rightarrow p^*}^2 \right)  \right]^{1/2} \| \theta  \|_{p,q}  \enspace ,
	$$
	where $1/p+ 1/p^{*} = 1$. Here $\| \cdot \|_{q \rightarrow p^*}$ denotes the matrix induced norm.
\end{defn}

\begin{remark}
	\label{remark:group}
	\rm
	Group norm as a capacity measure has been considered in \citep{neyshabur2015norm}. The same reasoning as before shows that group norm serves as a more stringent constraint than Fisher-Rao norm. 
	In particular, Theorem~\ref{def:norm-induced} implies that the group norm ball with radius defined by $1/r = (k^{[1/p^{*} - 1/q]_{+}})^L \max_i \| X_i \|_{p^*}$ is contained in the unit Fisher-Rao ball, 
	$
	B_{\| \cdot \|_{p,q}}(r) \subset B_{\rm fr}(1)
	$.
	By Theorem 1 in \citep{neyshabur2015norm} we obtain,
	$
	\R_N\left(B_{p,q}(r) \right)\precsim 2^L \cdot {\rm Polylog} \,  / \sqrt{N}  \rightarrow 0
	$.
	Once again, we point out that the intriguing combinatorial factor $( k^{[1 / p^{*} - 1 / q]_{+}})^L \max_i \| X_i \|_{p^*}$  in Theorem 1 of \cite{neyshabur2015norm} exactly cancels with our pre-factor in the norm comparison. 
		The above calculations show that another subset of $B_{\rm fr}(1)$, induced by the group norm geometry, has good generalization error (without additional factors).

\end{remark}

\begin{defn}[Path norm]
	\label{def:norm-path}
	\rm
	Define the following data-dependent ``path norm'', for $q\geq 1$
	{\small
	$$
		\vvvert \pi(\theta) \vvvert_{q} := \left[ \E \left( \sum_{ i_0,\ldots,i_L} \left|X_{i_0} \prod_{t=1}^{L+1} D^t_{i_t}(X)\right|^{q^*} \right)^{2/q^*} \right]^{1/2} \| \pi(\theta) \|_q 
	$$
	}where $\frac{1}{q} + \frac{1}{q^*} = 1$, indices set $i_0 \in [p], i_1 \in [k_1], \ldots i_L \in [k_L], i_{L+1} = 1$. Here $\pi(\theta)$ is a notation for all the paths (from input to output) of the weights $\theta$. 
\end{defn}

\begin{remark}
	\rm
	\label{remark:path}
 	The path norm $\Vert \pi(\cdot) \Vert_q$ has been investigated in \citep{neyshabur2015path}. Focusing on the case $q = 1$, Theorem \ref{thm:normcomp} gives $B_{\| \pi(\cdot) \|_1}(r) \subset B_{\rm fr}(1)$ for $1/r = \max_i \| X_i \|_\infty$. By Corollary 7 in \citep{neyshabur2015norm} we obtain
	$
	\R_N\left( B_{\Vert \pi(\cdot) \Vert_1}(r) \right) \precsim 2^L \cdot {\rm Polylog} \, / \sqrt{N} \rightarrow 0
	$.
	Once again, the additional factor appearing in the Rademacher complexity bound in \citep{neyshabur2015norm}, cancels with our pre-factor in the norm comparison. 
\end{remark}

\begin{defn}[Induced norm]
	\label{def:norm-induced}
	\rm
	Define the following data-dependent ``matrix induced norm'', for $p, q>0$, as
	$$
		\vvvert \theta  \vvvert_{p \rightarrow q} := \left[ \mathbb{E}\left( \| X \|_{p}^2  \prod_{t=1}^{L+1} \| D^{t}(X)\|^2_{q \rightarrow p} \right) \right]^{1/2} \| \theta \|_{p \rightarrow q} \enspace .
	$$
\end{defn}

Remark that $\| D^{t}(X)\|_{q \rightarrow p}^2$ may contain dependence on $k$ when $p\neq q$. This motivates us to consider the following generalization of matrix induced norm, where the norm for each $W^t$ can be different.
\begin{defn}[Chain of induced norm]\label{def:chain}
	\rm
	Define the following data-dependent ``chain of induced norm'', for a chain of $P = (p_0, p_1, \ldots, p_{L+1}), p_i > 0$
	\begin{align*}
		\vvvert \theta  \vvvert_{P} := \left[ \mathbb{E}\left( \| X \|_{p_0}^2  \prod_{t=1}^{L+1} \| D^{t}(X)\|^2_{p_t \rightarrow p_t} \right) \right]^{1/2} \| \theta \|_P \enspace .
	\end{align*}
\end{defn}

\begin{remark}
	\rm
	Theorem~\ref{thm:normcomp} applied to Definition~\ref{def:chain} exhibits a new flexible norm that dominates the Fisher-Rao norm. The example shows that one can motivate a variety of new norms (and their corresponding geometry) as subsets of the Fisher-Rao norm ball. 
\end{remark}

We will conclude this section with two geometric observations about the Fisher-Rao norm with absolute loss function $\ell(f, y) = |f - y|$ and one output node. In this case, even though $B_{\rm fr}(1)$ is non-convex, it is star-shaped. 

\begin{lem}[Star shape]
	\label{lem:star-shape}
	For any $\theta \in \Theta_L$, let $\{r \theta, r > 0\}$ denote the line connecting between $0$ and $\theta$ to infinity.
	Then one has,
	$
		\frac{d}{dr} \|r \theta \|_{\rm fr}^2 = \frac{2(L+1)}{r} \|r \theta \|_{\rm fr}^2
	$
	which also implies
	$
		\|r \theta \|_{\rm fr} = r^{L+1} \| \theta \|_{\rm fr}
	$.
\end{lem}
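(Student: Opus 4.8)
The plan is to reduce the lemma to the positive homogeneity of the network map $f_\theta$ in the parameter $\theta$, translate this into a power law for the Fisher-Rao norm via the analytical formula, and then differentiate. Throughout I fix $\theta \in \Theta_L$ and work with $r>0$.

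First I would establish that $f_\theta$ is positively homogeneous of degree $L+1$ in $\theta$, i.e. $f_{r\theta}(x)=r^{L+1}f_\theta(x)$ for every $x$ and every $r>0$. There are two equivalent ways to see this. Directly: scaling every weight matrix $W^t\mapsto rW^t$ rescales the first pre-activation by $r$, and since the condition $\sigma(z)=\sigma'(z)z$ forces each $\sigma_t$ to be positively homogeneous of degree one (so that $\sigma_t(cz)=c\,\sigma_t(z)$ and $\sigma_t'(cz)=\sigma_t'(z)$ for $c>0$), an induction on layers gives $N^t\mapsto r^tN^t$ and $O^t\mapsto r^tO^t$; after the $L+1$ weight multiplications one obtains $O^{L+1}\mapsto r^{L+1}O^{L+1}$. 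Alternatively, one may invoke Lemma~\ref{lem:induction}, whose second identity reads $\langle\theta,\nabla_\theta f_\theta(x)\rangle=(L+1)f_\theta(x)$; this is exactly Euler's relation for a function homogeneous of degree $L+1$, and integrating it along the ray $\{r\theta\}$ recovers $f_{r\theta}=r^{L+1}f_\theta$.

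Next I would translate this into a statement about the Fisher-Rao norm using the analytical formula. For the absolute loss $\ell(f,y)=|f-y|$ with one output node one has $(\partial\ell/\partial f)^2=1$, so Theorem~\ref{thm:FR} collapses to $\|\theta\|_{\rm fr}^2=(L+1)^2\,\E\,f_\theta(X)^2$ (equivalently, use the product form \eqref{eq:FR}, where the same homogeneity shows each $D^t(X)$ is invariant under positive rescaling and hence $v(r\theta,X)=r^{L+1}v(\theta,X)$). Substituting the homogeneity of $f_\theta$ gives the power law
\[
\|r\theta\|_{\rm fr}^2=(L+1)^2\,\E\,f_{r\theta}(X)^2=r^{2(L+1)}\,(L+1)^2\,\E\,f_\theta(X)^2=r^{2(L+1)}\,\|\theta\|_{\rm fr}^2,
\]
which is precisely $\|r\theta\|_{\rm fr}=r^{L+1}\|\theta\|_{\rm fr}$. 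Differentiating in $r$ then yields $\frac{d}{dr}\|r\theta\|_{\rm fr}^2=2(L+1)\,r^{2(L+1)-1}\|\theta\|_{\rm fr}^2=\frac{2(L+1)}{r}\|r\theta\|_{\rm fr}^2$, as claimed; the two displayed conclusions are thus equivalent, each following from the other together with the value at $r=1$.

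The only delicate point is the homogeneity step at the non-differentiable points of the rectifier. The matrices $D^t(X)=\diag[\sigma'(N^t(X))]$ depend on $X$ only through the sign pattern of the pre-activations, and positive scaling $N^t\mapsto r^tN^t$ leaves this sign pattern unchanged, so $D^t$ is genuinely invariant for all $r>0$; the measure-zero event where some pre-activation vanishes does not affect the expectation, and the identity $\sigma(z)=\sigma'(z)z$ holds there for any admissible choice of $\sigma'(0)$. I expect this bookkeeping\,---\,rather than any substantive computation\,---\,to be the main thing to get right, after which the differentiation is immediate.
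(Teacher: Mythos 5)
Your proof is correct, but it runs in the opposite logical direction from the paper's. The paper's proof is a two-line computation: it differentiates $\|r\theta\|_{\rm fr}^2 = (L+1)^2\,\E f_{r\theta}(X)^2$ in $r$, invokes Lemma~\ref{lem:induction} at the parameter point $r\theta$ to substitute $\langle \theta, \nabla_\theta f_{r\theta}(X)\rangle = \frac{L+1}{r} f_{r\theta}(X)$, which yields the ODE directly, and then obtains the scaling law $\|r\theta\|_{\rm fr} = r^{L+1}\|\theta\|_{\rm fr}$ by solving that ODE. You instead prove the scaling law first\,---\,via the pointwise homogeneity $f_{r\theta}(x) = r^{L+1} f_\theta(x)$, established either by an elementary layerwise induction (correctly observing that $\sigma(z)=\sigma'(z)z$ forces $\sigma$ to be linear on each half-line through the origin, hence positively homogeneous of degree one with $\sigma'$ scale-invariant) or by integrating the Euler relation of Lemma~\ref{lem:induction} along the ray\,---\,and then recover the differential identity by differentiating $r^{2(L+1)}\|\theta\|_{\rm fr}^2$. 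The two arguments rest on the same structural fact (the Euler identity and degree-$(L+1)$ homogeneity are equivalent), but yours buys something slightly stronger: the pointwise function-space identity $f_{r\theta} = r^{L+1} f_\theta$, which gives the norm statement for free via Theorem~\ref{thm:FR} (or Eqn.~\eqref{eq:FR}) without solving any ODE, and your explicit handling of the kink of the rectifier (the sign pattern of the pre-activations, hence each $D^t(X)$, is invariant under $r>0$, with the measure-zero vanishing-pre-activation event harmless) makes rigorous a point the paper's terse proof leaves implicit. The paper's route, in turn, is shorter and showcases Lemma~\ref{lem:induction} as the single engine behind the geometry of the Fisher-Rao ball.
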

Despite the non-convexity of $B_{\rm fr}(1)$, there is certain convexity in the function space: 
\begin{lem}[Convexity in $f_{\theta}$]
	\label{lem:convex.f}
	For any $\theta_1, \theta_2 \in \Theta_L$ such that $\frac{1}{L+1}\| \theta_1 \|_{\rm fr},\frac{1}{L+1}\| \theta_2 \|_{\rm fr} \leq 1$ we have for any $0 < \lambda <1$, the convex combination
	$\lambda f_{\theta_1} + (1-\lambda) f_{\theta_2}$ can be realized by a parameter $\theta' \in \Theta_{L+1}$ in the sense
	$
	f_{\theta'} = \lambda f_{\theta_1} + (1-\lambda) f_{\theta_2}
	$,
	and satisfies
	$
	\frac{1}{(L+1)+1} \| \theta' \|_{\rm fr} \leq 1
	$.
\end{lem}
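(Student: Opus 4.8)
The plan is to realize $\lambda f_{\theta_1} + (1-\lambda) f_{\theta_2}$ by running the two depth-$L$ networks in parallel and appending one extra linear layer that forms the convex combination, and then to control the Fisher-Rao norm of the resulting depth-$(L+1)$ network through the function-space identity of Theorem~\ref{thm:FR}.

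First I would construct $\theta'$ explicitly. Writing the weights of $\theta_i$ as $W^0_i, \ldots, W^L_i$, I would set the first weight of $\theta'$ to the horizontal concatenation $W'^0 = [\,W^0_1 \ \ W^0_2\,]$, the intermediate weights to the block-diagonal matrices $W'^t = \diag(W^t_1, W^t_2)$ for $1 \le t \le L$, and the final weight to $W'^{L+1} = (\lambda, 1-\lambda)^\top \in \mathbb{R}^{2\times 1}$, with the new output activation $\sigma_{L+2}$ chosen to be the identity. Since the activations act coordinate-wise and the block structure prevents the two sub-networks from interacting, a layer-by-layer induction shows that the hidden vector after $L+1$ layers is exactly $(f_{\theta_1}(x), f_{\theta_2}(x))$, so that $f_{\theta'}(x) = \lambda f_{\theta_1}(x) + (1-\lambda) f_{\theta_2}(x)$. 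I would then check that the dimensions match $\Theta_{L+1}$ (hidden widths $k^{(1)}_t + k^{(2)}_t$ and penultimate width $2$) and that every activation, including the identity at the output, still satisfies $\sigma(z) = \sigma'(z)z$, so that Theorem~\ref{thm:FR} applies to $\theta'$ at depth $L+1$.

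Next I would bound the norm. Because $\theta'$ has depth $L+1$ and we use the absolute loss (so that $(\partial \ell/\partial f)^2 = 1$), the remark following Theorem~\ref{thm:FR} gives $\| \theta' \|_{\rm fr} = (L+2)\,(\E f_{\theta'}(X)^2)^{1/2}$, and identically $\| \theta_i \|_{\rm fr} = (L+1)\,(\E f_{\theta_i}(X)^2)^{1/2}$. The hypothesis $\tfrac{1}{L+1}\|\theta_i\|_{\rm fr} \le 1$ thus reads $(\E f_{\theta_i}(X)^2)^{1/2} \le 1$, while the desired conclusion $\tfrac{1}{L+2}\|\theta'\|_{\rm fr} \le 1$ is equivalent to $(\E f_{\theta'}(X)^2)^{1/2} \le 1$. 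Applying Minkowski's inequality in $L^2(\mathcal{P})$ to $f_{\theta'} = \lambda f_{\theta_1} + (1-\lambda) f_{\theta_2}$ yields
$$
(\E f_{\theta'}(X)^2)^{1/2} \le \lambda (\E f_{\theta_1}(X)^2)^{1/2} + (1-\lambda)(\E f_{\theta_2}(X)^2)^{1/2} \le \lambda + (1-\lambda) = 1,
$$
which closes the argument.

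The norm estimate is the routine part, being essentially Minkowski's inequality passed through the Theorem~\ref{thm:FR} identity. The main obstacle is the bookkeeping in the construction step: one must verify that the concatenation/block-diagonal weights genuinely compute the two networks in parallel with no cross terms, that the appended layer together with an identity output activation produces exactly the convex combination while remaining in the admissible class $\sigma(z) = \sigma'(z)z$, and that the resulting parameter legitimately lies in $\Theta_{L+1}$ so that the depth-$(L+1)$ form of Theorem~\ref{thm:FR} can be invoked.
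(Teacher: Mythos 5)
Your proposal is correct and takes essentially the same route as the paper: the paper likewise places the two networks side-by-side with cross-weights hard-coded to zero, appends a $(\lambda,\,1-\lambda)$ output layer under the identity activation (noting Lemma~\ref{lem:induction}, and hence Theorem~\ref{thm:FR}, survives this construction), and then bounds $\frac{1}{L+2}\| \theta' \|_{\rm fr} = \left( \E f_{\theta'}^2 \right)^{1/2}$ via the same Minkowski/Cauchy--Schwarz step in $L^2$. Your write-up simply makes the block-diagonal bookkeeping more explicit; there is no substantive difference.
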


\subsection{Generalization}
\label{sec:generalization}

In this section, we will investigate the generalization puzzle for deep learning through the lens of the Fisher-Rao norm. We will first introduce a simple proof in the case of multi-layer linear networks, that capacity control with Fisher-Rao norm ensures good generalization. Then we will provide an argument bounding the generalization error of rectified neural networks with Fisher-Rao norm as capacity control, via norm caparisons in Section~\ref{sec:norm-compare}. We complement our argument with extensive numerical investigations in Section~\ref{sec:exp}.

\begin{thm}[Deep Linear Networks]
	\label{thm:generalization-linear}
	Consider multi-layer linear networks with $\sigma(x) = x$, $L$ hidden layers, input dimension $p$ and single output unit, and parameters $\Theta_L = \{ W^0, W^1, \ldots, W^L \}$. Then we have
	$\E \R_N\left( B_{\rm fr}(\gamma)  \right) \leq \gamma \sqrt{p/N}$
	assuming the Gram matrix $\E[X X^T] \in \mathbb{R}^{p \times p}$ is full rank.
\end{thm}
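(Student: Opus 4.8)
The plan is to exploit the collapse of a linear network to a single linear predictor. Writing $w(\theta) := \prod_{t=0}^{L} W^t \in \mathbb{R}^p$, the identity activation gives $f_\theta(x) = x^T w(\theta)$, so $f_\theta$ depends on $\theta$ only through $w(\theta)$. First I would check that $\theta \mapsto w(\theta)$ is surjective onto $\mathbb{R}^p$: for $L=0$ this is immediate since $w(\theta)=W^0$, and for $L\geq 1$ one routes a single coordinate through the hidden layers, taking $W^1\cdots W^L = e_1 \in \mathbb{R}^{k_1}$ and letting the first column of $W^0$ be an arbitrary $v\in\mathbb{R}^p$, which realizes $w(\theta)=v$. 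Next, invoking the remark following Theorem~\ref{thm:FR} for the absolute loss, $\|\theta\|_{\rm fr} = (L+1)(\E f_\theta(X)^2)^{1/2}$, and since $\E f_\theta(X)^2 = w(\theta)^T \Sigma\, w(\theta)$ with $\Sigma := \E[XX^T]$, the ball $B_{\rm fr}(\gamma) = \{\theta : \|\theta\|_{\rm fr} \leq (L+1)\gamma\}$ maps exactly onto the ellipsoid $\mathcal{E} := \{w \in \mathbb{R}^p : w^T \Sigma\, w \leq \gamma^2\}$ in function space.

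With this reparametrization the supremum defining the empirical Rademacher complexity becomes a linear program over an ellipsoid. Setting $g := \frac{1}{N}\sum_{i=1}^N \epsilon_i X_i$, I would write
\[
\R_N(B_{\rm fr}(\gamma)) = \E_\epsilon \sup_{w \in \mathcal{E}} \langle g, w \rangle .
\]
Here the full-rank hypothesis on $\Sigma$ is essential: it lets me substitute $w = \Sigma^{-1/2} u$, reducing the constraint to $\|u\|^2 \leq \gamma^2$, and then Cauchy--Schwarz evaluates the inner supremum exactly as $\sup_{w \in \mathcal{E}} \langle g, w\rangle = \gamma\,(g^T \Sigma^{-1} g)^{1/2}$, attained at $w = \gamma\,\Sigma^{-1} g\,(g^T \Sigma^{-1} g)^{-1/2}$.

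It then remains to bound $\E\,\E_\epsilon\, \gamma\,(g^T \Sigma^{-1} g)^{1/2}$, where the outer $\E$ is over the sample. Concavity of the square root and Jensen's inequality give
\[
\E\,\E_\epsilon\,(g^T \Sigma^{-1} g)^{1/2} \leq \big(\E\,\E_\epsilon\, g^T \Sigma^{-1} g\big)^{1/2}.
\]
Since the Rademacher variables are independent with $\E_\epsilon[\epsilon_i \epsilon_j] = \delta_{ij}$, the cross terms vanish and $\E_\epsilon\, g^T \Sigma^{-1} g = N^{-2} \sum_i X_i^T \Sigma^{-1} X_i$; taking the sample expectation, using $\E[X_i X_i^T] = \Sigma$ together with the cyclic property of the trace, yields $\E\,\E_\epsilon\, g^T \Sigma^{-1} g = N^{-2}\,\mathrm{tr}(\Sigma^{-1}\cdot N\Sigma) = p/N$. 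Combining the two displays gives $\E\,\R_N(B_{\rm fr}(\gamma)) \leq \gamma\sqrt{p/N}$, as claimed.

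The arithmetic is routine once the setup is in place; I expect the only real subtlety to be the two structural reductions in the first paragraph\,---\,verifying that $\theta \mapsto w(\theta)$ is onto $\mathbb{R}^p$, so that optimizing over $B_{\rm fr}(\gamma)$ is genuinely optimizing over the full ellipsoid $\mathcal{E}$, and confirming that the Fisher-Rao ball coincides with $\mathcal{E}$ rather than a proper subset. The full-rank assumption on $\Sigma$ enters precisely to make $\mathcal{E}$ a bounded ellipsoid (so the inner supremum is finite) and to permit the whitening substitution $w = \Sigma^{-1/2}u$.
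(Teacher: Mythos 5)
Your proof is correct and follows essentially the same route as the paper's: collapse the linear network to $w(\theta)=\prod_{t=0}^L W^t$, identify $\|\theta\|_{\rm fr}^2/(L+1)^2 = w(\theta)^T\,\E[XX^T]\,w(\theta)$ via the absolute-loss formula, evaluate the supremum by Cauchy--Schwarz in the $\Sigma$-weighted geometry, and finish with Jensen and the trace identity $\mathrm{tr}(\Sigma^{-1}\Sigma)=p$. Your surjectivity check on $\theta\mapsto w(\theta)$ is a small addition the paper omits (it upgrades the reparametrization to an equality, whereas the paper only needs the containment of $B_{\rm fr}(\gamma)$ in the ellipsoid for an upper bound), but it does not change the argument in substance.
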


\begin{remark}
	\rm
	Combining the above Theorem with classic symmetrization and margin bounds \citep{koltchinskii2002empirical}, one can deduce that for binary classification, the following generalization guarantee holds (for any margin parameter $\alpha>0$)
	$
		\mathbb{E} \mathbf{1}\left[ f_\theta(X) Y < 0  \right] \leq \widehat{\mathbb{E}}\mathbf{1}\left[ f_\theta(X) Y \leq \alpha  \right]  + \frac{C}{\alpha}  \R_N\left( B_{\rm fr}(\gamma)  \right) + C \sqrt{(1/N)\log 1/\delta}
	$
	for any $\theta \in B_{\rm fr}(\gamma)$ with probability at least $1 - \delta$, where $C>0$ is some constant. 
	We would like to emphasize that to explain generalization in this over-parametrized multi-layer linear network, it is indeed desirable that the generalization error in Theorem~\ref{thm:generalization-linear} only depends on the Fisher-Rao norm and the intrinsic input dimension $p$, without additional dependence on other network parameters (such as width, depth) or extraneous $X$-dependent factors. 
	
\end{remark}

In the case of ReLU networks, it turns out that bounding $\R_N\left( B_{\rm fr}(\gamma)  \right)$ directly in terms of the Fisher-Rao norm is a challenging task. Instead, we decompose $\R_N\left( B_{\rm fr}(\gamma)  \right)$ into two terms: the Rademacher complexity of a subset of the Fisher-Rao norm ball induced by distinct geometry (spectral, group, and path norm ball), plus a deterministic function approximation error term. 
Denote by $\F_{\rm fr}(\gamma):= \{ f_{\theta}: \theta \in B_{\rm fr}((L+1)\gamma)  \}$ the functions induced by parameters in $\gamma$-radius Fisher-Rao norm ball. Let the function class realized by spectral norm ball, group-$p, q$ norm ball, and path-$q$ norm ball be
	$\F_{\sigma}(\gamma) := \{ f_{\theta}: \theta \in B_{{\Vert \cdot \Vert}_\sigma}(\gamma) \}$,
	$\F_{p, q}(\gamma) := \{ f_{\theta}: \theta \in B_{\Vert \cdot \Vert_{p,q}}(\gamma) \} $,
	$\F_{\pi, q}(\gamma) := \{ f_{\theta}: \theta \in B_{\Vert \pi(\cdot) \Vert_{q}}(\gamma) \}$. 

As discussed in Remarks \ref{remark:spectral}-\ref{remark:path}, the spectral, group, and path norms induce distinct geometric subsets of the Fisher-Rao norm ball, in the following sense
	$\F_{\rm fr}(1) \supseteq \F_{\sigma}(r), \F_{p, q}(r), \F_{\pi, 1}(r)$ for appropriate radii $r$.
The following Theorem quantifies the generalization error of $\F_{\rm fr}(1)$, relying on our norm comparison inequality, and the results in current literature \citep{bartlett2017spectrally,neyshabur2015norm}.
\begin{prop}[Deep Rectified Networks]
	\label{thm:generalization-rectified}
	For $\mathcal{G}$ taken as $\F_{\sigma}(r)$ with $r=[\widehat{\E}\Vert X \Vert^2]^{-1/2}$, one has
	{\small
	\begin{align*}
		\E \R_N\left( B_{\rm fr}(1)  \right) \leq \overbrace{\sup_{f \in \F_{\rm fr}(1)}\inf_{g \in \mathcal{G}(r)} \| f - g \|_{\infty} }^{\text{function approximation error}} + \frac{1}{\sqrt{N}} \cdot {\rm Polylog}.
	\end{align*}}Similar bounds hold for $\mathcal{G}$ taken as either $\F_{p,q}$ or $\F_{\pi,1}$.
\end{prop}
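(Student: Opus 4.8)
The plan is to bound the empirical process $\frac{1}{N}\sum_{i=1}^N \epsilon_i f(X_i)$ uniformly over $f \in \F_{\rm fr}(1)$ by comparing each such $f$ against its best $L^\infty$-approximant drawn from the smaller class $\mathcal{G}(r)$, for which a Rademacher bound is already available in the literature. The norm comparison (Theorem~\ref{thm:normcomp}) guarantees $\mathcal{G}(r) \subseteq \F_{\rm fr}(1)$ for the stated radius, so the approximants lie inside a class on which the spectral-norm analysis of \citep{bartlett2017spectrally} applies; the gap between the full Fisher-Rao ball and this subset is exactly the deterministic function-approximation term appearing in the statement.

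Concretely, fix a realization of the Rademacher signs $\epsilon$. For each $f \in \F_{\rm fr}(1)$ choose $g_f \in \mathcal{G}(r)$ with $\| f - g_f \|_\infty \leq \inf_{g \in \mathcal{G}(r)} \| f - g \|_\infty + o(1)$, and write the additive decomposition
\[
\frac{1}{N}\sum_{i=1}^N \epsilon_i f(X_i)
= \frac{1}{N}\sum_{i=1}^N \epsilon_i g_f(X_i)
+ \frac{1}{N}\sum_{i=1}^N \epsilon_i \big( f(X_i) - g_f(X_i) \big) .
\]
The first summand is at most $\sup_{g \in \mathcal{G}(r)} \frac{1}{N}\sum_i \epsilon_i g(X_i)$ because $g_f \in \mathcal{G}(r)$. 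For the second summand, since $|\epsilon_i| = 1$ the triangle inequality gives $\frac{1}{N}\sum_i | f(X_i) - g_f(X_i) | \leq \| f - g_f \|_\infty \leq \sup_{f \in \F_{\rm fr}(1)} \inf_{g \in \mathcal{G}(r)} \| f - g \|_\infty$, a deterministic quantity free of both $\epsilon$ and of the choice of $g_f$. Crucially $g_f$ is selected to minimize the $L^\infty$ distance and does \emph{not} depend on $\epsilon$, so the right-hand side no longer depends on $f$; taking the supremum over $f$ and then $\E_\epsilon$ yields
\[
\R_N\big( B_{\rm fr}(1) \big)
\leq \R_N\big( \mathcal{G}(r) \big)
+ \sup_{f \in \F_{\rm fr}(1)} \inf_{g \in \mathcal{G}(r)} \| f - g \|_\infty .
\]

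It remains to take the expectation over the data and control $\E\,\R_N(\mathcal{G}(r))$. For $\mathcal{G} = \F_\sigma(r)$ with $r = [\widehat{\E}\| X \|^2]^{-1/2}$, the inclusion $B_{\| \cdot \|_\sigma}(r) \subset B_{\rm fr}(1)$ from Remark~\ref{remark:spectral}, together with Theorem~1.1 of \citep{bartlett2017spectrally}, gives $\E\,\R_N(\F_\sigma(r)) \precsim {\rm Polylog}/\sqrt{N}$, where this choice of $r$ makes the $[\widehat{\E}\| X \|^2]^{1/2}$ factor in Bartlett's bound cancel against the pre-factor in the norm comparison. Substituting back reproduces the claimed inequality. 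The identical argument, replacing the approximant class and the cited Rademacher bound (Theorem~1 and Corollary~7 of \citep{neyshabur2015norm}), delivers the analogous statements for $\mathcal{G} = \F_{p,q}$ and $\mathcal{G} = \F_{\pi,1}$.

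The main obstacle is conceptual rather than computational: the Fisher-Rao ball $B_{\rm fr}(1)$ is non-convex and does not directly admit a spectral/group/path covering argument, so the whole reduction hinges on (i) the norm-comparison inclusions of Theorem~\ref{thm:normcomp}, which certify that the approximants indeed lie in a class with a known Rademacher bound, and (ii) the exact cancellation of the data-dependent pre-factors, without which the $1/\sqrt{N}$ rate would be spoiled by extraneous width/depth or $\| X \|$ factors. The decomposition step itself is routine once one observes that the $L^\infty$-approximant can be chosen independently of the Rademacher draw, so that the residual empirical process is bounded deterministically.
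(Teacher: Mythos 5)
Your proposal is correct and follows essentially the same route as the paper's proof: the same approximation-plus-Rademacher decomposition via an $L^\infty$-closest approximant $g_f \in \mathcal{G}(r)$ chosen independently of the Rademacher signs, followed by the same appeal to the literature bounds of \citep{bartlett2017spectrally,neyshabur2015norm} with the data-dependent pre-factors cancelling through the norm-comparison radii. Your only (harmless) refinement is selecting a near-minimizer with $o(1)$ slack rather than assuming the infimum is attained.
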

We would like to emphasize that the approximation error is on the function space rather than the parameter space. Furthermore, because of the cancellation of pre-factors, as discussed earlier, the generalization bound does \emph{not involve pre-factors}, in contrast to what one would get with a direct application of \citep{bartlett2017spectrally,neyshabur2015norm}.
Before concluding this section, we present the contour plot of Fisher-Rao norm and path-$2$ norm in a simple two layer ReLU network in Fig.~\ref{fig:contour} (in Appendix), to better illustrate the geometry of Fisher-Rao norm and the subsets induced by other norms. We choose two weights as $x,y$-axis and plot the levelsets of the norms.

\section{Experiments}
\label{sec:exp}
\noindent \textbf{Over-parametrization with Hidden Units} \quad
In order to understand the effect of network over-parametrization we investigated the relationship between different proposals for capacity control and the number of parameters of the neural network. For simplicity we focused on a fully connected architecture consisting of $L$ hidden layers with $k$ neurons per hidden layer so that the expression simplifies to $d = k[p + k(L-1) + K]$. The network parameters were learned by minimizing the cross-entropy loss on the CIFAR-10 image classification dataset with no explicit regularization nor data augmentation. The cross-entropy loss was optimized using  200 epochs of minibatch gradient descent utilizing minibatches of size 50 and otherwise identical experimental conditions described in \citep{zhang2016understanding}. The same experiment was repeated using an approximate form of natural gradient descent called the Kronecker-factored approximate curvature (K-FAC) method \citep{martens2015optimizing} with the same learning rate and momentum schedules. The first fact we observe is that the Fisher-Rao norm remains approximately constant (or decreasing) when the network is overparametrized by increasing the width $k$ at fixed depth $L=2$ (see Fig.~\ref{fig:k}). If we vary the depth $L$ of the network at fixed width $k=500$ then we find that the Fisher-Rao norm is essentially constant when measured in its `natural units' of $L+1$ (Fig.~\ref{fig:l} supplementary material).  Finally, if we compare each proposal based on its absolute magnitude, the Fisher-Rao norm is distinguished as the minimum-value norm, and becomes $O(1)$ when evaluated using the model distribution. This self-normalizing property can be understood as a consequence of the relationship to flatness discussed in section \ref{sec:info-geo}, which holds when the expectation is taken with respect to the model. 

\noindent \textbf{Corruption with Random Labels} \quad
Over-parametrized neural networks tend to exhibit good generalization despite perfectly fitting the training set \citep{zhang2016understanding}. In order to pinpoint the ``correct'' notion of complexity which drives generalization error, we conducted a series of experiments in which we changed both the network size and the signal-to-noise ratio of the datasets.
In particular, we focus on the set of neural architectures obtained by varying the hidden layer width $k$ at fixed depth $L=2$ and moreover for each training/test example we assign a random label with probability $\alpha$. 

It can be seen from the last two panels of Fig.~\ref{fig:momentum} that for non-random labels ($\alpha = 0$), the empirical Fisher-Rao norm actually decreases with increasing $k$, in tandem with the generalization error and moreover this correlation seems to persist when we vary the label randomization. Overall the Fisher-Rao norm is distinguished from other measures of capacity by the fact that its empirical version seems to track the generalization gap and moreover this trend does not appear to be sensitive to the choice of optimization. The stability of the Fisher-Rao norm with respect to increasing $k$ suggests that the infinitely wide limit $k \to \infty$ exists and is independent of $k$, and indeed this was recently verified using mean-field techniques \cite{karakida2018universal}. Finally, we note that unlike the vanilla gradient, the natural gradient differentiates the different architectures by their Fisher-Rao norm (Fig.~\ref{fig:kfac} supplementary material). Although we don't completely understand this phenomenon, it is likely a consequence of the fact that the natural gradient is iteratively minimizing the FR semi-norm.

\noindent \textbf{Margin Story} \quad 
\cite{bartlett2017spectrally} adopted the margin story to explain generalization. They investigated the spectrally-normalized margin to explain why CIFAR-10 with random labels is a harder dataset (poorer generalization) than the uncorrupted CIFAR-10 (which generalize well). Here we adopt the same idea in this experiment, where we plot margin normalized by the empirical Fisher-Rao norm, in comparison to the spectral norm, based on the model trained either by vanilla gradient and natural gradient. It can be seen in the supplementary material that the Fisher-Rao-normalized margin also accounts for the generalization gap between random and original CIFAR-10. In addition, Table \ref{table:margins} shows that the empirical Fisher-Rao norm improves the normalized margin relative to the spectral norm. These results were obtained by optimizing with the natural gradient but are not sensitive to the choice of optimizer.

\begin{table}
\begin{center}
\begin{tabular}{@{}l|lll@{}} 
 \toprule
    & Model FR & Empirical FR & Spectral  \\
    \midrule
    $\alpha = 0$ & 1.61 & 22.68 & 136.67 \\
    $\alpha = 1$ & 2.12 & 35.98 & 205.56 \\
    Ratio & 0.76 & \textbf{0.63} & 0.66 \\
    \bottomrule
    \end{tabular}
    \end{center}
    \caption{
    Comparison of Fisher-Rao norm and spectral norm after training with natural gradient using original dataset ($\alpha=0$) and with random labels ($\alpha=1$). Qualitatively similar results holds for GD+momentum.
    }
    \label{table:margins}
\end{table}

\noindent \textbf{Natural Gradient and Pre-conditioning} \quad
It was shown in \citep{shalev2017failures} that multi-layer networks struggle to learn certain piecewise-linear curves because the problem instances are poorly-conditioned. The failure was attributed to the fact that simply using a black-box model without a deeper analytical understanding of the problem structure could be computationally sub-optimal. Our results (in Fig.~\ref{fig:failures} of Appendix) suggest that the problem can be overcome within the confines of black-box optimization by using natural gradient. In other words, the natural gradient automatically pre-conditions the problem and appears to achieve  similar performance as that attained by hard-coded convolutions \citep{shalev2017failures}, within the same number of iterations.

\begin{figure}[!h]
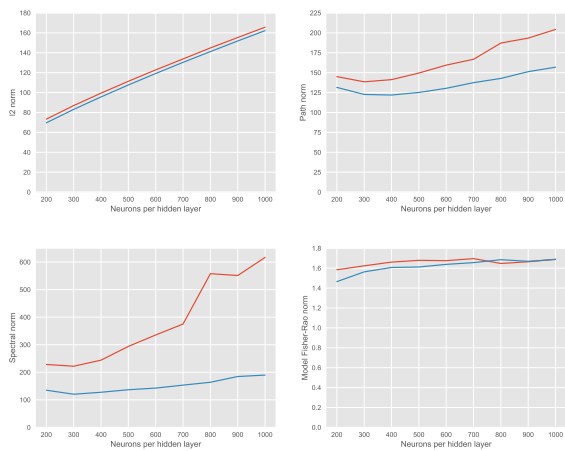

\centering
\includegraphics[width=0.5\linewidth]{cifar10_mlp_l2_scale.pdf}
\hspace{-0.5cm}
\includegraphics[width=0.5\linewidth]{cifar10_mlp_path_scale.pdf}
\hspace{-0.5cm}
\includegraphics[width=0.5\linewidth]{cifar10_mlp_spectral_scale.pdf}
\hspace{-0.5cm}
\includegraphics[width=0.5\linewidth]{cifar10_mlp_fr_scale.pdf}
\caption{Dependence of different norms on width $k$ of hidden layers $(L=2)$ after optimizing with vanilla gradient descent (red) and natural gradient descent (blue).} \label{fig:k}
\end{figure}

\begin{figure}[!h]
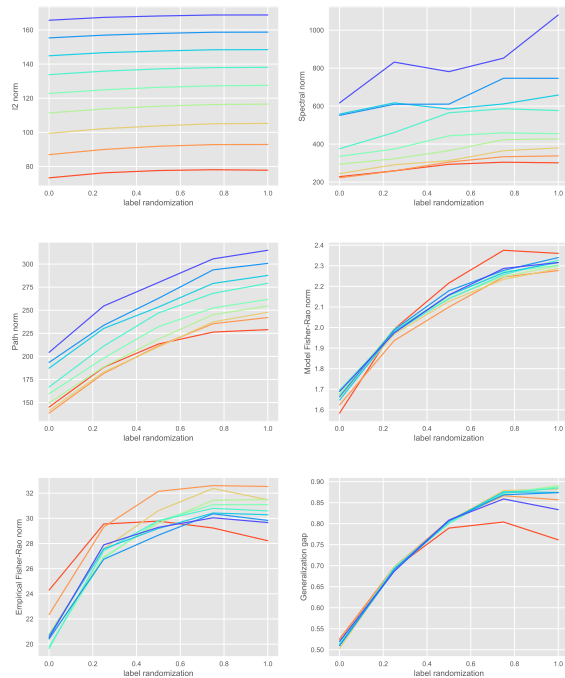

\centering
\includegraphics[width=0.5\linewidth]{cifar10_momentum_mlp_l2.pdf}
\hspace{-0.5cm}
\includegraphics[width=0.5\linewidth]{cifar10_momentum_mlp_spectral.pdf}
\hspace{-0.5cm}
\includegraphics[width=0.5\linewidth]{cifar10_momentum_mlp_path.pdf}
\hspace{-0.5cm}
\includegraphics[width=0.5\linewidth]{cifar10_momentum_mlp_fr.pdf}
\hspace{-0.5cm}
\includegraphics[width=0.5\linewidth]{cifar10_momentum_mlp_fr_emp.pdf}
\hspace{-0.5cm}
\includegraphics[width=0.5\linewidth]{cifar10_momentum_mlp_gap.pdf}
\caption{Dependence of capacity measures on label randomization after optimizing with gradient descent. The colors show the effect of varying network width from $k=200$ (red) to $k=1000$ (blue) in increments of 100. \label{fig:momentum}}
\end{figure}

 \section{Further Discussion}

In this paper we studied the generalization puzzle of deep learning from an invariance viewpoint. The notions of invariance come from several angles: information geometry, non-linear local transformations, functional equivalence, algorithmic invariance under parametrization, ``flat'' minima invariance under linear transformations, among many others. We proposed a new non-convex capacity measure using the Fisher-Rao norm and demonstrated its desirable properties from both from the theoretical and the empirical side.
\newpage

{ \small
\bibliography{ref}
\bibliographystyle{plain}
}

\newpage
\appendix

\onecolumn

\section{Proofs}
\label{sec:proofs}

\begin{proof}[Proof of Lemma~\ref{lem:induction}]
	Recall the property of the activation function $\sigma(z) = \sigma'(z) z$. Let us prove  for any $0 \leq t \leq s \leq L $, and any $l \in [k_{s+1}]$
	\begin{align}
		\sum_{i \in [k_t], j \in [k_{t+1}]} \frac{\partial O_l^{s+1}}{\partial W^{t}_{ij}} W^{t}_{ij} = O_l^{s+1}(x).
	\end{align}
	
	We prove this statement via induction on the non-negative gap $s-t$. 
	Starting with $s - t = 0$, we have
	\begin{align*}
		& \frac{\partial O^{t+1}_{l}}{\partial W_{il}^{t}} = \frac{\partial O^{t+1}_{l}}{\partial N^{t+1}_{l}}  \frac{\partial N^{t+1}_{l}}{\partial W^{t}_{il}} = \sigma'(N_l^{t+1}(x)) O^{t}_i(x),\\
		& \frac{\partial O^{t+1}_{l}}{\partial W_{ij}^{t}} = 0, \quad \text{for}~j \neq l,
	\end{align*}
	and, therefore,
	\begin{align}
		\label{eq:base-induction}
	\sum_{i \in [k_t], j \in [k_{t+1}]} \frac{\partial O_l^{t+1}}{\partial W^{t}_{ij}} W^{t}_{ij} &=  \sum_{i \in [k_t]} \sigma'(N_l^{t+1}(x)) O^{t}_i(x) W^{t}_{il} = \sigma'(N_l^{t+1}(x)) N_l^{t+1}(x) = O_l^{t+1}(x).
	\end{align}
	This solves the base case when $s - t = 0$.

	Let us assume for general $s-t \leq h$ the induction hypothesis ($h \geq 0$), and let us prove it for $s - t = h+1$. Due to chain-rule in the back-propagation updates
	\begin{align}
		\frac{\partial O_l^{s+1}}{\partial W^{t}_{ij}} = \frac{\partial O_l^{s+1}}{\partial N_l^{s+1}} \sum_{k \in [k_s]} \frac{\partial N_l^{s+1}}{\partial O^{s}_{k}} \frac{\partial O^{s}_{k}}{\partial W^{t}_{ij}}.
	\end{align}
	Using the induction on $\frac{\partial O^{s}_{k}}{\partial W^{t}_{ij}}$ as $(s-1) - t = h$
	\begin{align}
		\sum_{i \in [k_t], j \in [k_{t+1}]} \frac{\partial O^{s}_{k}}{\partial W^{t}_{ij}} W^{t}_{ij} = O_k^s(x),
	\end{align}
	and, therefore,
	\begin{align*}
		&\sum_{i \in [k_t], j \in [k_{t+1}]} \frac{\partial O_l^{s+1}}{\partial W^{t}_{ij}} W^{t}_{ij}\\
		&= \sum_{i \in [k_t], j \in [k_{t+1}]} \frac{\partial O_l^{s+1}}{\partial N_l^{s+1}} \sum_{k \in [k_s]} \frac{\partial N_l^{s+1}}{\partial O^{s}_{k}} \frac{\partial O^{s}_{k}}{\partial W^{t}_{ij}} W^{t}_{ij} \\
		&= \frac{\partial O_l^{s+1}}{\partial N_l^{s+1}} \sum_{k \in [k_s]} \frac{\partial N_l^{s+1}}{\partial O^{s}_{k}} \sum_{i \in [k_t], j \in [k_{t+1}]} \frac{\partial O^{s}_{k}}{\partial W^{t}_{ij}} W^{t}_{ij} \\
		& = \sigma'(N_l^{s+1}(x)) \sum_{k \in [k_s]}  W^s_{kl} O^{s}_k(x) = O_l^{s+1}(x).
	\end{align*}
	This completes the induction argument. 
	In other words, we have proved 
	for any $t, s$ that $t \leq s$, and $l$ is any hidden unit in layer $s$
	\begin{align}
		\sum_{i,j \in {\rm dim}(W^t)} \frac{\partial O^{s+1}_l}{\partial W^{t}_{ij}} W^{t}_{ij} = O^{s+1}_l(x).
	\end{align}
	
	Remark that in the case when there are hard-coded zero weights, the proof still goes through exactly. The reason is, for the base case $s = t$,
	\begin{align*}
		\sum_{i \in [k_t], j \in [k_{t+1}]} \frac{\partial O_l^{t+1}}{\partial W^{t}_{ij}} W^{t}_{ij} &=  \sum_{i \in [k_t]} \sigma'(N_l^{t+1}(x)) O^{t}_i(x) W^{t}_{il} \mathbf{1}(W^{t}_{il} \neq 0)  = \sigma'(N_l^{t+1}(x)) N_l^{t+1}(x) = O_l^{t+1}(x).
	\end{align*}
	and for the induction step, 
	\begin{align*}
		\sum_{i \in [k_t], j \in [k_{t+1}]} \frac{\partial O_l^{s+1}}{\partial W^{t}_{ij}} W^{t}_{ij}
		 = \sigma'(N_l^{s+1}(x)) \sum_{k \in [k_s]}  W^s_{kl} O^{s}_k(x) \mathbf{1}(W^s_{kl} \neq 0) = O_l^{s+1}(x).
	\end{align*}
\end{proof}

\begin{proof}[Proof of Corollary~\ref{cor:largemargin}]
	Observe that $\partial \ell(f, Y) / \partial f =  -y$ if $yf<1$, and $\partial \ell(f, Y) / \partial f = 0$ if $yf \geq 1$. When the output layer has only one unit, we find
	\begin{align*}
		\langle \nabla_\theta \widehat{L}(\theta), \theta \rangle &= (L+1) \widehat{\E}  \left[ \frac{\partial \ell(f_\theta(X), Y)}{\partial f_\theta(X) } f_{\theta}(X) \right]
		=(L+1) \widehat{\E}  \left[ -Yf_{\theta}(X) \mathbf{1}_{Yf_{\theta}(X) < 1}\right] \enspace .
	\end{align*}
	For a stationary point $\theta$, we have $\nabla_\theta \widehat{L}(\theta) = \mathbf{0}$, which implies the LHS of the above equation is 0. Now recall that the second condition that $\theta$ separates the data
	implies implies $-Yf_{\theta}(X) <0$ for any point in the data set. In this case, the RHS equals zero if and only if $Yf_{\theta}(X) \geq 1$. 
\end{proof}

\begin{proof}[Proof of Corollary~\ref{cor:stationary}]
	The proof follows from applying Lemma~\ref{lem:induction}
	\begin{align*}
		0 = \theta^T \nabla_{\theta}\widehat{L}(\theta) = (L+1) \widehat{\E}  \left[ (Y - X^T \prod_{t=0}^{L} W^t )X^T \prod_{t=0}^{L} W^t  \right] \enspace , 
	\end{align*}
	which means
	$\langle w(\theta), \mathbf{X}^T \mathbf{X} w(\theta) -  \mathbf{X}^T \mathbf{Y}  \rangle = 0 $.
\end{proof}

\begin{proof}[Proof of Theorem~\ref{thm:normcomp} (spectral norm)]
	The proof follows from a peeling argument from the right hand side. Recall that $O^t \in \mathbb{R}^{1 \times k_t}$, $W^L \in \mathbb{R}^{k_L \times 1}$ and $|O^L W^L| \leq \Vert W^L \|_{\sigma} \| O^L  \|_{2}$ so one has
	\begin{align*}
		\frac{1}{(L+1)^2}\| \theta \|_{\rm fr}^2 &= \mathbb{E} \left[ |   O^L W^L D^{L+1}  |^2 \right] \\
		&\leq \mathbb{E} \left[ \| W^L \|_{\sigma}^2 \cdot \| O^L  \|_{2}^2 \cdot  |D^{L+1}(X)|^2  \right]  \\
		&= \mathbb{E} \left[ |D^{L+1}(X)|^2 \cdot \| W^L \|_{\sigma}^2 \cdot \| O^{L-1} W^{L-1} D^{L}  \|_{2}^2 \right]  \\ 
		&\leq \mathbb{E} \left[ |D^{L+1}(X)|^2 \cdot \| W^L \|_{\sigma}^2 \cdot \| O^{L-1} W^{L-1} \|_2^2  \cdot \| D^{L}  \|_{\sigma}^2 \right]  \\
		&\leq \mathbb{E} \left[ \| D^{L}  \|_{\sigma}^2 |D^{L+1}(X)|^2 \cdot \| W^L \|_{\sigma}^2 \| W^{L-1} \|_{\sigma}^2 \cdot \| O^{L-1} \|_2^2   \right] \\
		& \leq \mathbb{E} \left[ \| D^{L}  \|_{\sigma}^2 \|D^{L+1}(X)\|_{\sigma}^2  \| O^{L-1} \|_2^2   \right] \cdot  \| W^{L-1} \|_{\sigma}^2 \| W^L \|_{\sigma}^2 \\
		& \ldots \quad \text{repeat the process to bound $\| O^{L-1} \|_2$}\\
		&\leq  \mathbb{E}\left( \| X \|^2  \prod_{t=1}^{L+1} \| D^{t}(X)\|_{\sigma}^2 \right)  \prod_{t=0}^{L} \| W^t \|_{\sigma}^2 = \vvvert \theta \vvvert_{\sigma}^2.
	\end{align*}	
\end{proof}

\begin{proof}[Proof of Theorem~\ref{thm:normcomp} (group norm)]
	The proof still follows a peeling argument from the right. We have
	\begin{align*}
		\frac{1}{(L+1)^2} \| \theta \|_{\rm fr}^2 &= \mathbb{E} \left[ |   O^L W^L D^{L+1}  |^2 \right] \\
		&\leq \mathbb{E} \left[ \| W^L \|_{p,q}^2 \cdot \| O^L  \|_{p^*}^2 \cdot  |D^{L+1}(X)|^2  \right] \quad \text{use \eqref{eq:holder}} \\
		&= \mathbb{E} \left[ |D^{L+1}(X)|^2 \cdot \| W^L \|_{p,q}^2 \cdot \| O^{L-1} W^{L-1} D^{L}  \|_{p^*}^2 \right]  \\ 
		&\leq \mathbb{E} \left[ |D^{L+1}(X)|^2 \cdot \| W^L \|_{p,q}^2 \cdot \| O^{L-1} W^{L-1} \|_{q}^2  \cdot \| D^{L}  \|_{q\rightarrow p^*}^2 \right] \\
		&\leq \mathbb{E} \left[ \| D^{L}  \|_{q\rightarrow p^*}^2 \|D^{L+1}(X)\|_{p,q}^2 \cdot \| W^L \|_{p,q}^2 \| W^{L-1} \|_{p,q}^2 \cdot \| O^{L-1} \|_{p^*}^2   \right] \quad \text{use \eqref{eq:norm-bd}} \\
		&= \mathbb{E} \left[ \| D^{L}  \|_{q\rightarrow p^*}^2 \|D^{L+1}(X)\|_{p,q}^2 \cdot \| O^{L-1} \|_{p^*}^2   \right] \cdot  \| W^{L-1} \|_{p,q}^2 \| W^L \|_{p,q}^2 \\
		& \leq \ldots \quad \text{repeat the process to bound $\| O^{L-1} \|_{p^*}$} \\
		&\leq  \mathbb{E} \left( \| X \|_{p^*}^2  \prod_{t=1}^{L+1} \| D^{t}(X)\|_{q \rightarrow p^*}^2 \right)  \prod_{t=0}^{L} \| W^t \|_{p,q}^2 = \vvvert \theta \vvvert_{p, q}^2
	\end{align*}
In the proof of the first inequality we used Holder's inequality
\begin{align}
	\label{eq:holder}
	\langle w, v \rangle \leq \| w \|_{p} \|v \|_{p^*}
\end{align}
where $\frac{1}{p}+ \frac{1}{p^*} = 1$.
Let's prove
for $v \in \mathbb{R}^n$, $M \in \mathbb{R}^{n \times m}$, we have
\begin{align}
	\| v^T M \|_{q} \leq \| v \|_{p^*} \| M \|_{p, q}.
\end{align}  
Denote each column of $M$ as $M_{\cdot j}$, for $1\leq j \leq m$, 
\begin{align}
	\label{eq:norm-bd}
	\| v^T M \|_{q} = \left( \sum_{j=1}^m |v^T M_{\cdot j}|^q \right)^{1/q} \leq \left( \sum_{j=1}^m \| v \|_{p^*}^q \| M_{\cdot j} \|_p^q \right)^{1/q} = \| v \|_{p^*} \| M \|_{p, q}.
\end{align}
\end{proof}

\begin{proof}[Proof of Theorem~\ref{thm:normcomp} (path norm)]
	The proof is due to Holder's inequality. For any $x \in \mathbb{R}^p$
	\begin{align*}
		&\left| \sum_{i_0, i_1, \ldots, i_L} x_{i_0} W_{i_0 i_1}^0 D^1_{i_1}(x) W_{i_1 i_2}^1 \cdots D^{L}_{i_L}(x) W_{i_L}^L D^{L+1}(x) \right|\\
		&\leq \left( \sum_{i_0, i_1, \ldots, i_L} |x_{i_0} D^1_{i_1}(x) \cdots D^{L}_{i_L}(x) D^{L+1}(x)|^{q^*}  \right)^{1/q^*} \cdot \left( \sum_{i_0, i_1, \ldots, i_L} |W_{i_0 i_1}^0 W_{i_1 i_2}^1 W_{i_2 i_3}^2 \cdots W_{i_L}^L |^q \right)^{1/q}.
	\end{align*}
	Therefore we have
	\begin{align*}
		& \frac{1}{(L+1)^2}\|\theta \|_{\rm fr}^2 = \mathbb{E} \left| \sum_{i_0, i_1, \ldots, i_L} X_{i_0} W_{i_0 i_1}^0 D^1_{i_1}(X) W_{i_1 i_2}^1 \cdots W_{i_L}^L D^{L+1}_{i_L}(X) \right|^2 \\
		& \leq \left( \sum_{i_0, i_1, \ldots, i_L} |W_{i_0 i_1}^0 W_{i_1 i_2}^1 W_{i_2 i_3}^2 \cdots W_{i_L}^L |^q \right)^{2/q} \cdot  \mathbb{E} \left( \sum_{i_0, i_1, \ldots, i_L} |X_{i_0} D^1_{i_1}(X) \cdots D^{L}_{i_L}(X) D^{L+1}(X) |^{q^*} \right)^{2/q^*},
	\end{align*}
	which gives
	\begin{align*}
		\frac{1}{L+1} \| \theta \|_{\rm fr} \leq \left[ \E \left( \sum_{i_0, i_1, \ldots, i_L} |X_{i_0} \prod_{t=1}^{L+1} D^t_{i_t}(X)|^{q^*} \right)^{2/q^*} \right]^{1/2} \cdot \left(\sum_{i_0, i_1, \ldots, i_L} \prod_{t=0}^L |W_{i_t i_{t+1}}^t|^q \right)^{1/q} = \vvvert \pi(\theta) \vvvert_{q}.
	\end{align*}
\end{proof}

\begin{proof}[Proof of Theorem~\ref{thm:normcomp} (matrix-induced norm)]
	The proof follows from the recursive use of the inequality,
	$$
	\| M \|_{p\rightarrow q} \| v \|_p \geq \|v^T M\|_q.
	$$
	We have
	\begin{align*}
		\| \theta \|_{\rm fr}^2 &= \mathbb{E} \left[ |   O^L W^L D^{L+1}  |^2 \right] \\
		&\leq \mathbb{E} \left[ \| W^L \|_{p\rightarrow q}^2 \cdot \| O^L  \|_{p}^2 \cdot  |D^{L+1}(X)|^2  \right]  \\
		&\leq \mathbb{E} \left[ |D^{L+1}(X)|^2 \cdot \| W^L \|_{p\rightarrow q}^2 \cdot \| O^{L-1} W^{L-1} D^{L}  \|_{p}^2 \right]  \\ 
		&\leq \mathbb{E} \left[ |D^{L+1}(X)|^2 \cdot \| W^L \|_{p\rightarrow q}^2 \cdot \| O^{L-1} W^{L-1} \|_{q}^2  \cdot \| D^{L}  \|_{q\rightarrow p}^2 \right] \\
		&\leq \mathbb{E} \left[ \| D^{L}  \|_{q\rightarrow p}^2 \|D^{L+1}(X)\|_{q\rightarrow p}^2 \cdot \| W^L \|_{p \rightarrow q}^2 \| W^{L-1} \|_{p\rightarrow q}^2 \cdot \| O^{L-1} \|_{p}^2   \right] \\
		& \leq \ldots \quad \text{repeat the process to bound $\| O^{L-1} \|_{p}$} \\
		&\leq  \E\left( \| X \|_p^2  \prod_{t=1}^{L+1} \| D^{t}(X)\|_{q \rightarrow p}^2 \right)  \prod_{t=0}^{L} \| W^t \|_{p \rightarrow q}^2 = \vvvert \theta \vvvert_{p \rightarrow q}^2,
	\end{align*}
	where third to last line is because $D^{L+1}(X) \in \mathbb{R}^1$, $|D^{L+1}(X)| = \| D^{L+1}(X)\|_{q \rightarrow p}$.
\end{proof}

\begin{proof}[Proof of Theorem~\ref{thm:normcomp} (chain of induced norm)]
	The proof follows from a different strategy of peeling the terms from the right hand side, as follows,
	\begin{align*}
		\| \theta \|_{\rm fr}^2 &= \mathbb{E} \left[ |   O^L W^L D^{L+1}  |^2 \right] \\
		&\leq \mathbb{E} \left[ \| W^L \|_{p_L \rightarrow p_{L+1}}^2 \cdot \| O^L  \|_{p_L}^2 \cdot  |D^{L+1}(X)|^2  \right]  \\
		&\leq \mathbb{E} \left[ |D^{L+1}(X)|^2 \cdot \| W^L \|_{p_L \rightarrow p_{L+1}}^2 \cdot \| O^{L-1} W^{L-1} D^{L}  \|_{p_L}^2 \right]  \\ 
		&\leq \mathbb{E} \left[ |D^{L+1}(X)|^2 \cdot \| W^L \|_{p_L \rightarrow p_{L+1}}^2 \cdot \| O^{L-1} W^{L-1}\|_{p_{L}}  \| D^{L}  \|_{p_L \rightarrow p_L}^2\right] \\
		&\leq \mathbb{E} \left[ \| D^{L}  \|_{p_L \rightarrow p_L}^2 |D^{L+1}(X)|^2 \cdot \| W^L \|_{p_L \rightarrow p_{L+1}}^2 \| W^{L-1} \|_{p_{L-1}\rightarrow p_{L}}^2 \cdot \| O^{L-1} \|_{p_{L-1}}^2   \right] \\
		&\leq  \mathbb{E}\left( \| X \|_{p_0}^2  \prod_{t=1}^{L+1} \| D^{t}(X)\|_{p_t \rightarrow p_t}^2 \right)  \prod_{t=0}^{L} \| W^t \|_{p_t \rightarrow p_{t+1}}^2 = \vvvert \theta \vvvert_{P}^2.
	\end{align*}	
\end{proof}

\begin{proof}[Proof of Lemma~\ref{lem:star-shape}]
	\begin{align*}
		\frac{d}{dr} \|r \theta \|_{\rm fr}^2 & = \mathbb{E}\left[ 2 \langle \theta,  \nabla_\theta f_{r\theta}(X) \rangle f_{r\theta}(X) \right] \\
		& = \mathbb{E}\left[ \frac{2(L+1)}{r} f_{r\theta}(X)  f_{r\theta}(X) \right]  \quad \text{use Lemma~\ref{lem:induction}} \\
		& = \frac{2(L+1)}{r} \|r \theta \|_{\rm fr}^2
	\end{align*}
	The last claim can be proved through solving the simple ODE. 
\end{proof}

\begin{proof}[Proof of Lemma~\ref{lem:convex.f}]
	Let us first construct $\theta' \in \Theta_{L+1}$ that realizes $\lambda f_{\theta_1} + (1-\lambda) f_{\theta_2}$. The idea is very simple:
	we put $\theta_1$ and $\theta_2$ networks side-by-side, then construct an additional output layer with weights $\lambda$, $1-\lambda$ on the output of $f_{\theta_1}$ and $f_{\theta_2}$, and the final output layer is passed through $\sigma(x) = x$. One can easily see that our key Lemma~\ref{lem:induction} still holds for this network: the interaction weights between $f_{\theta_1}$ and $f_{\theta_2}$ are always hard-coded as $0$. Therefore we have constructed a $\theta' \in \Theta_{L+1}$ that realizes $\lambda f_{\theta_1} + (1-\lambda) f_{\theta_2}$. 
	
	Now recall that
	\begin{align*}
		\frac{1}{L+2} \| \theta' \|_{\rm fr} &= \left( \E  f_{\theta'}^2  \right)^{1/2} \\
		& =  \left( \E ( \lambda f_{\theta_1} + (1-\lambda) f_{\theta_2} )^2 \right)^{1/2} \\
		& \leq \lambda \left( \E f_{\theta_1}^2 \right)^{1/2} + (1-\lambda) \left( \E f_{\theta_2}^2 \right)^{1/2} \leq 1
	\end{align*}
	because $\E[f_{\theta_1} f_{\theta_2}] \leq \left( \E f_{\theta_1}^2 \right)^{1/2} \left( \E f_{\theta_2}^2 \right)^{1/2}$.
\end{proof}

\begin{proof}[Proof of Theorem~\ref{thm:generalization-linear}]
	Due to Eqn.~\eqref{eq:FR}, one has
	\begin{align*}
		\frac{1}{(L+1)^2} \| \theta \|^2_{\rm fr} &= \mathbb{E} \left[ v(\theta, X)^T XX^T v(\theta, X) \right] \\
		&=  v(\theta)^T \mathbb{E} \left[ XX^T \right] v(\theta)
	\end{align*}
	because in the linear case $v(\theta, X) = W^0 D^1(x) W^1 D^2(x) \cdots D^L(x) W^L D^{L+1}(x) = \prod_{t=0}^L W^t =: v(\theta) \in \mathbb{R}^p$.
	Therefore
	\begin{align*}
		\R_N\left( B_{\rm fr}(\gamma)  \right) & = \E_{\epsilon} \sup_{\theta \in B_{\rm fr}(\gamma) } \frac{1}{N} \sum_{i=1}^N \epsilon_i f_{\theta}(X_i) \\
		& = \E_{\epsilon} \sup_{\theta \in B_{\rm fr}(\gamma) } \frac{1}{N} \sum_{i=1}^N \epsilon_i X_i^T v(\theta) \\
		& = \E_{\epsilon} \sup_{\theta \in B_{\rm fr}(\gamma) }  \frac{1}{N} \left\langle \sum_{i=1}^N \epsilon_i X_i, v(\theta) \right\rangle \\
		& \leq \gamma \E_{\epsilon} \frac{1}{N} \left\| \sum_{i=1}^N \epsilon_i X_i \right\|_{\left[ \mathbb{E} (XX^T) \right]^{-1}} \\
		& \leq \gamma \frac{1}{\sqrt{N}} \sqrt{ \frac{1}{N} \E_{\epsilon} \left\| \sum_{i=1}^N \epsilon_i X_i \right\|_{\left[ \mathbb{E} (XX^T) \right]^{-1}}^2 } \\
		& =  \gamma \frac{1}{\sqrt{N}} \sqrt{ \left\langle \frac{1}{N} \sum_{i=1}^N X_i X_i^T, \left[ \mathbb{E} (XX^T) \right]^{-1} \right\rangle}.
	\end{align*}
	Therefore
	\begin{align*}
		\E \R_N\left( B_{\rm fr}(\gamma)  \right) \leq \gamma \frac{1}{\sqrt{N}} \sqrt{  \E \left\langle \frac{1}{N} \sum_{i=1}^N X_i X_i^T, \left[ \mathbb{E} (XX^T) \right]^{-1} \right\rangle} = \gamma \sqrt{\frac{p}{N}}.
	\end{align*}
	
\end{proof}

\begin{proof}[Proof of Proposition~\ref{thm:generalization-rectified}]
	If $\mathcal{G} \subseteq \F$ then one has the lower bound $\R_N(\mathcal{G}) \leq \R_N(\F)$ on the empirical Rademacher complexity of $\F$. One can also obtain an upper bound by examining how the sub-space of functions $\mathcal{G}$ approximates $\F$. For each $f \in \F$ consider the closest point $g_f \in \mathcal{G}$ to $f$,
	\begin{align*}
		g_f :=  \argmin_{g \in \F_{\sigma}} \| f - g \|_\infty \enspace .
	\end{align*}
	Then the empirical Rademacher complexity $\R_N(\F)$ is upper-bounded in terms of $\R_N(\mathcal{G})$ by
	\begin{align*}
		\R_N(\F) & = \E_{\epsilon} \sup_{f \in \F} \frac{1}{N} \sum_{i=1}^N \epsilon_i f(X_i) \enspace , \\
		&\leq \E_{\epsilon} \sup_{f \in \F} \frac{1}{N} \sum_{i=1}^N \epsilon_i [f(X_i) - g_f(X_i)] + \R_N(\mathcal{G}) \enspace , \\
		&\leq \sup_{f \in \F}\inf_{g \in \mathcal{G}} \| f - g \|_\infty + \R_N(\mathcal{G}) \enspace .
	\end{align*}
	Therefore, taking expectation values over the data gives,
	\begin{equation*}
	\E \R_N(\F) \leq \sup_{f \in \F}\inf_{g \in \mathcal{G}} \| f - g \|_\infty + \E \R_N(\mathcal{G}) \enspace .
	\end{equation*}
	Setting $\F = \F_{\rm fr}(1)$ without loss of generality, we obtain the required result by appropriate choice of $\mathcal{G}\subseteq\F_{\rm fr}(1)$. 
	
	Setting $\mathcal{G}=\F_\sigma(r)$ with $r = 1/ [\widehat{\E}\Vert X \Vert^2]^{1/2}$ gives (Remark 4.1, Theorem 1.1 in \cite{bartlett2017spectrally}),
	\begin{equation*}
	\E \R_N(\F_{\rm fr}(1)) \leq \sup_{f \in \F_{\rm fr}(1)}\inf_{g \in \F_\sigma(r)} \| f - g \|_\infty + \frac{{\rm Polylog}}{\sqrt{N}} \enspace .
	\end{equation*}
	Setting $\mathcal{G} = \F_{p, q}(r)$ with $r = 1/(k^{[1/p^{*} - 1/q]_{+}})^L \max_i \| X_i \|_{p^*}$ gives (Remark 4.2, Theorem 1 in \cite{neyshabur2015norm})
	\begin{equation*}
	\E \R_N(\F_{\rm fr}(1)) \leq \sup_{f \in \F_{\rm fr}(1)}\inf_{g \in \F_{p, q}(r)} \| f - g \|_\infty + \frac{2^L {\rm Polylog}}{\sqrt{N}} \enspace .
	\end{equation*}
	Setting $\mathcal{G} = \F_{\pi, 1}(r)$ with $r = 1/\max_i \| X_i \|_\infty$ gives (Remark 4.3, Corollary in \cite{neyshabur2015norm})
	\begin{equation*}
	\E \R_N(\F_{\rm fr}(1)) \leq \sup_{f \in \F_{\rm fr}(1)}\inf_{g \in \F_{\pi, 1}(r)} \| f - g \|_\infty + \frac{2^L {\rm Polylog}}{\sqrt{N}} \enspace .
	\end{equation*}
	In all cases data-dependent pre-factors exactly cancel out and moreover the first term is in function space, not in parameter space.
\end{proof}

\subsection{Invariance of natural gradient}
\label{sec:inv_nat_grad}
Consider the continuous-time analog of natural gradient flow, 
	\begin{align}
		\label{eq:theta}
		d \theta_t = - \mathbf{I}(\theta_t)^{-1} \nabla_{\theta} L(\theta_t) dt,
	\end{align}
	where $\theta \in \mathbb{R}^p$. Consider a differentiable transformation from one parametrization to another $\theta \mapsto \xi \in \mathbb{R}^q$ denoted by $\xi(\theta): \mathbb{R}^p \rightarrow \mathbb{R}^q$. Denote the Jacobian $\mathbf{J}_{\xi}(\theta)  = \frac{\partial (\xi_1, \xi_2, \ldots, \xi_q)}{\partial (\theta_1, \theta_2, \ldots, \theta_p)} \in \mathbb{R}^{q \times p}$. Define the loss function $\tilde{L}: \xi \rightarrow \mathbb{R}$ that satisfies
	\begin{align*}
		L(\theta) = \tilde{L} (\xi (\theta)) =  \tilde{L} \circ \xi(\theta), 
	\end{align*}
	and denote $\tilde{\bf{I}}(\xi)$ as the Fisher Information on $\xi$ associated with $\tilde{L}$. 
	Consider also the natural gradient flow on the $\xi$ parametrization,
	\begin{align}
		\label{eq:xi}
		d \xi_t = - \tilde{\mathbf{I}}(\xi_t)^{-1} \nabla_{\xi} \tilde{L}(\xi_t) dt.
	\end{align}
	Intuitively, one can show that the natural gradient flow is ``invariant'' to the specific parametrization of the problem.
\begin{lem}[Parametrization invariance]
	\label{lem:invariance}
	Denote $\theta \in \mathbb{R}^p$, and the differentiable transformation from one parametrization to another $\theta \mapsto \xi \in \mathbb{R}^q$ as $\xi(\theta): \mathbb{R}^p \rightarrow \mathbb{R}^q$. Assume $\mathbf{I}(\theta)$, $\tilde{\mathbf{I}}(\xi)$ are invertible, and consider two natural gradient flows $\{\theta_t, t>0\}$ and $\{\xi_t, t>0\}$ defined in Eqn.~\eqref{eq:theta} and \eqref{eq:xi} on $\theta$ and $\xi$ respectively. 
	
	(1) Re-parametrization: if $q = p$, and assume $\mathbf{J}_{\xi}(\theta)$ is invertible, then natural gradient flow on the two parameterizations satisfies, 
	$$
	\xi(\theta_t) = \xi_t, \quad \forall t,
	$$
	if the initial locations $\theta_0, \xi_0$ are equivalent in the sense $\xi(\theta_0) = \xi_0$.
	
	(2) Over-parametrization: If $q > p$ and $\xi_t = \xi(\theta_t)$ at some fixed time $t$, then the infinitesimal change satisfies
	$$
	\xi(\theta_{t + dt}) - \xi(\theta_{t}) =  M_t (\xi_{t+dt} - \xi_{t}), \quad \text{$M_t$ has eigenvalues either $0$ or $1$}
	$$ 
	where $M_t = \mathbf{I} (\xi_t)^{-1/2} (I_{q} -  U_{\perp} U_{\perp}^T) \mathbf{I} (\xi_t)^{1/2}$, and $U_\perp$ denotes the null space of $\mathbf{I} (\xi)^{1/2} \mathbf{J}_{\xi}(\theta)$.
\end{lem}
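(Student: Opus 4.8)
The plan is to reduce both parts to a single computation of $\tfrac{d}{dt}\xi(\theta_t)$ via the chain rule, comparing it against the intrinsic $\xi$-flow. The two structural facts I would establish first are the transformation laws induced by $\xi = \xi(\theta)$. Differentiating $L(\theta) = \tilde L(\xi(\theta))$ gives the gradient relation $\nabla_\theta L(\theta) = \mathbf{J}_\xi(\theta)^T \nabla_\xi \tilde L(\xi)$. More importantly, since the model in $\theta$-coordinates is $p_{\xi(\theta)}$, the identity $\nabla_\theta \log p_{\xi(\theta)} = \mathbf{J}_\xi(\theta)^T \nabla_\xi \log p_\xi$ together with the definition of $\mathbf{I}(\theta)$ in Definition~\ref{def:fr} yields the pullback law for the Fisher information,
\begin{align*}
	\mathbf{I}(\theta) = \mathbf{J}_\xi(\theta)^T\, \tilde{\mathbf{I}}(\xi(\theta))\, \mathbf{J}_\xi(\theta).
\end{align*}
Note that invertibility of $\mathbf{I}(\theta)$ forces $\mathbf{J}_\xi(\theta)$ to have full column rank $p$, which I will use throughout.

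For Part (1), abbreviating $\mathbf{J} = \mathbf{J}_\xi(\theta_t)$ and $\tilde{\mathbf{I}} = \tilde{\mathbf{I}}(\xi(\theta_t))$, I would compute $\tfrac{d}{dt}\xi(\theta_t) = \mathbf{J}\,\dot\theta_t = -\mathbf{J}\,\mathbf{I}(\theta_t)^{-1}\nabla_\theta L$, then substitute the two transformation laws to obtain $-\mathbf{J}(\mathbf{J}^T\tilde{\mathbf{I}}\mathbf{J})^{-1}\mathbf{J}^T\nabla_\xi\tilde L$. When $q=p$ and $\mathbf{J}$ is square and invertible, this collapses via $(\mathbf{J}^T\tilde{\mathbf{I}}\mathbf{J})^{-1} = \mathbf{J}^{-1}\tilde{\mathbf{I}}^{-1}\mathbf{J}^{-T}$ to exactly $-\tilde{\mathbf{I}}^{-1}\nabla_\xi\tilde L = \dot\xi_t$, the intrinsic flow \eqref{eq:xi}. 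Thus $t\mapsto\xi(\theta_t)$ and $t\mapsto\xi_t$ satisfy the same ODE; since they agree at $t=0$ by assumption, uniqueness of solutions gives $\xi(\theta_t)=\xi_t$ for all $t$.

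For Part (2) the same computation produces $\tfrac{d}{dt}\xi(\theta_t) = -\mathbf{J}(\mathbf{J}^T\tilde{\mathbf{I}}\mathbf{J})^{-1}\mathbf{J}^T\nabla_\xi\tilde L$, but now $\mathbf{J}$ is tall and cannot be cancelled. The key algebraic move is to write $A := \tilde{\mathbf{I}}^{1/2}$ and $B := A\mathbf{J}$, so that $\mathbf{J}(\mathbf{J}^T\tilde{\mathbf{I}}\mathbf{J})^{-1}\mathbf{J}^T = A^{-1}B(B^TB)^{-1}B^TA^{-1} = A^{-1}P\,A^{-1}$, where $P := B(B^TB)^{-1}B^T$ is the \emph{orthogonal} projector onto the range of $B=\tilde{\mathbf{I}}^{1/2}\mathbf{J}$. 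Since $\dot\xi_t = -\tilde{\mathbf{I}}^{-1}\nabla_\xi\tilde L = -A^{-2}\nabla_\xi\tilde L$, I can factor
\begin{align*}
	\tfrac{d}{dt}\xi(\theta_t) = -A^{-1}P\,A^{-1}\nabla_\xi\tilde L = A^{-1}P\,A\,\big(-A^{-2}\nabla_\xi\tilde L\big) = M_t\,\dot\xi_t,
\end{align*}
with $M_t = \tilde{\mathbf{I}}^{-1/2}P\,\tilde{\mathbf{I}}^{1/2}$. Identifying $P = I_q - U_\perp U_\perp^T$, where $U_\perp$ is an orthonormal basis for the orthogonal complement of the range of $\tilde{\mathbf{I}}^{1/2}\mathbf{J}$, recovers the stated form of $M_t$; passing to infinitesimal increments $\xi(\theta_{t+dt})-\xi(\theta_t) = M_t\,dt\,\dot\xi_t = M_t(\xi_{t+dt}-\xi_t)$ gives the displayed identity. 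Finally, because $M_t$ is \emph{similar} to the orthogonal projector $P$ via conjugation by $\tilde{\mathbf{I}}^{1/2}$, its eigenvalues coincide with those of $P$, namely $0$ and $1$.

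I expect the main obstacle to be conceptual bookkeeping rather than hard analysis: recognizing that the non-square ``metric projection'' $\mathbf{J}(\mathbf{J}^T\tilde{\mathbf{I}}\mathbf{J})^{-1}\mathbf{J}^T$ becomes a genuine orthogonal projector only after the symmetric conjugation by $\tilde{\mathbf{I}}^{1/2}$, and correctly matching the paper's terminology (the ``null space of $\tilde{\mathbf{I}}^{1/2}\mathbf{J}_\xi$'' should be read as the orthogonal complement of its column space, which is the kernel of $P$). One should also verify that $B^TB = \mathbf{J}^T\tilde{\mathbf{I}}\mathbf{J}$ is invertible, which is exactly the full-column-rank condition already guaranteed by invertibility of $\mathbf{I}(\theta)$.
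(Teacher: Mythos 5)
Your proposal is correct, and its treatment of Part (2) takes a genuinely different route from the paper's. Part (1) matches the paper essentially verbatim: both establish the pullback laws $\nabla_\theta L = \mathbf{J}_\xi(\theta)^T\nabla_\xi\tilde L$ and $\mathbf{I}(\theta)=\mathbf{J}_\xi(\theta)^T\tilde{\mathbf{I}}(\xi)\mathbf{J}_\xi(\theta)$, cancel the square invertible Jacobian through $(\mathbf{J}^T\tilde{\mathbf{I}}\mathbf{J})^{-1}=\mathbf{J}^{-1}\tilde{\mathbf{I}}^{-1}\mathbf{J}^{-T}$, and propagate from equal initial conditions (your explicit appeal to ODE uniqueness is a slight tightening of the paper's infinitesimal step-by-step argument). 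For Part (2), however, the paper derives the key identity $\mathbf{J}(\mathbf{J}^T\tilde{\mathbf{I}}\mathbf{J})^{-1}\mathbf{J}^T=\tilde{\mathbf{I}}^{-1/2}(I_q-U_\perp U_\perp^T)\tilde{\mathbf{I}}^{-1/2}$ indirectly, via the Sherman--Morrison--Woodbury formula applied to $\bigl[I_q+\tfrac{1}{\epsilon}\tilde{\mathbf{I}}^{1/2}\mathbf{J}\mathbf{J}^T\tilde{\mathbf{I}}^{1/2}\bigr]^{-1}$ followed by an $\epsilon\to 0$ limit along the eigendecomposition of $\tilde{\mathbf{I}}^{1/2}\mathbf{J}\mathbf{J}^T\tilde{\mathbf{I}}^{1/2}$; you instead recognize $B(B^TB)^{-1}B^T$ with $B=\tilde{\mathbf{I}}^{1/2}\mathbf{J}$ as the standard orthogonal projector onto the column space of $B$, which yields the same identity in one line of symmetric conjugation. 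Your route is more elementary (no regularization or limit needed), makes explicit the full-column-rank condition $B^TB=\mathbf{I}(\theta)$ invertible that the paper uses silently when writing $(B^T\tilde{\mathbf{I}}B)^{-1}$, and correctly repairs the terminological slip in the lemma statement: the ``null space of $\tilde{\mathbf{I}}^{1/2}\mathbf{J}_\xi(\theta)$'' must be read as the orthogonal complement of its column space (the kernel of $B^T$, equivalently of $BB^T$), since the literal kernel of $B$ is trivial under full column rank. What the paper's Woodbury route buys in exchange is a ridge-regularized projector $B(\epsilon I_p+B^T\tilde{\mathbf{I}}B)^{-1}B^T$ that would remain meaningful if $\mathbf{J}$ were rank-deficient, but under the lemma's stated invertibility hypotheses your direct argument fully suffices and both proofs conclude identically: $M_t=\tilde{\mathbf{I}}^{-1/2}(I_q-U_\perp U_\perp^T)\tilde{\mathbf{I}}^{1/2}$ is similar to a projector, hence has eigenvalues in $\{0,1\}$.
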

\begin{proof}[Proof of Lemma~\ref{lem:invariance}]
	From basic calculus, one has
	\begin{align*}
		\nabla_{\theta} L(\theta) &=  \mathbf{J}_\xi(\theta)^T \nabla_\xi \tilde{L}(\xi) \\
		\mathbf{I}(\theta) &= \mathbf{J}_\xi(\theta)^T \tilde{\mathbf{I}}(\xi) \mathbf{J}_\xi(\theta)
	\end{align*}
	Therefore, plugging in the above expression into the natural gradient flow in $\theta$
	\begin{align*}
		d \theta_t &= - \mathbf{I}(\theta_t)^{-1} \nabla_{\theta} L(\theta_t) dt \\
		&= - [\mathbf{J}_\xi(\theta_t)^T \tilde{\mathbf{I}}(\xi(\theta_t)) \mathbf{J}_\xi(\theta_t)]^{-1} \mathbf{J}_\xi(\theta_t)^T \nabla_\xi \tilde{L}(\xi(\theta_t)) dt.
	\end{align*}
	
	In the re-parametrization case, $\mathbf{J}_\xi(\theta)$ is invertible, and assuming $\xi_t = \xi(\theta_t)$,
	\begin{align*}
		d \theta_t 
		&= - [\mathbf{J}_\xi(\theta_t)^T \tilde{\mathbf{I}}(\xi(\theta_t)) \mathbf{J}_\xi(\theta_t)]^{-1} \mathbf{J}_\xi(\theta_t)^T \nabla_\xi \tilde{L}(\xi(\theta_t)) dt \\
		&=  - \mathbf{J}_\xi(\theta_t)^{-1} \tilde{\mathbf{I}}(\xi(\theta_t))^{-1} \nabla_\xi \tilde{L}(\xi(\theta_t)) dt \\
		\mathbf{J}_\xi(\theta_t) d \theta_t   &= - \tilde{\mathbf{I}}(\xi(\theta_t))^{-1} \nabla_\xi \tilde{L}(\xi(\theta_t)) dt \\
 d\xi(\theta_t) &=   - \tilde{\mathbf{I}}(\xi(\theta_t))^{-1} \nabla_\xi \tilde{L}(\xi(\theta_t)) dt = - \tilde{\mathbf{I}}(\xi_t)^{-1} \nabla_\xi \tilde{L}(\xi_t) dt.
	\end{align*}
	What we have shown is that under $\xi_t = \xi(\theta_t)$, $\xi(\theta_{t+dt}) = \xi_{t+dt}$. Therefore, if $\xi_0 = \xi(\theta_0)$, we have that $\xi_t = \xi(\theta_t)$. 
	
	In the over-parametrization case, $\mathbf{J}_\xi(\theta) \in \mathbb{R}^{q \times p}$ is a non-square matrix. For simplicity of derivation, abbreviate $B := \mathbf{J}_\xi(\theta) \in \mathbb{R}^{q \times p}$. We have
	\begin{align*}
		d\theta_t = \theta_{t+dt} - \theta_{t} &= - \mathbf{I}(\theta_t)^{-1} \nabla_{\theta} L(\theta_t) dt \\
		& = - [B^T \tilde{\mathbf{I}}(\xi) B]^{-1} B^T \nabla_\xi \tilde{L}(\xi(\theta_t)) dt \\
		B(\theta_{t+dt} - \theta_{t})&= - B \left[ B^T \tilde{\mathbf{I}}(\xi) B  \right]^{-1} B^T \tilde{L}(\xi(\theta_t)) dt.
	\end{align*}
	Via the Sherman-Morrison-Woodbury formula 
	\begin{align*}
		 \left[ I_q + \frac{1}{\epsilon} \tilde{\mathbf{I}} (\xi)^{1/2} B B^T \tilde{\mathbf{I}} (\xi)^{1/2} \right]^{-1}   & = I_q  - \tilde{\mathbf{I}} (\xi)^{1/2} B (\epsilon I_p + B^T \tilde{\mathbf{I}} (\xi) B )^{-1} B^T \tilde{\mathbf{I}} (\xi)^{1/2}
	\end{align*}
	Denoting $\tilde{\mathbf{I}} (\xi)^{1/2} B B^T \tilde{\mathbf{I}} (\xi)^{1/2} = U \Lambda U^T$, we have that ${\rm rank}(\Lambda) \leq p < q$. Therefore, the LHS as 
	\begin{align*}
		 \left[ I_q + \frac{1}{\epsilon} \tilde{\mathbf{I}} (\xi)^{1/2} B B^T \tilde{\mathbf{I}} (\xi)^{1/2} \right]^{-1}  & =  U \left[ I_q + \frac{1}{\epsilon} \Lambda \right]^{-1} U^{T}  \\
		\lim_{\epsilon \rightarrow 0} \left[ I_q + \frac{1}{\epsilon} \tilde{\mathbf{I}} (\xi)^{1/2} B B^T \tilde{\mathbf{I}} (\xi)^{1/2} \right]^{-1}  &= U_{\perp} U_{\perp}^T
	\end{align*}
	where $U_{\perp}$ corresponding to the space associated with zero eigenvalue of $\tilde{\mathbf{I}} (\xi)^{1/2} B B^T \tilde{\mathbf{I}} (\xi)^{1/2}$. 
	Therefore taking $\epsilon \rightarrow 0$, we have
	\begin{align*}
		\lim_{\epsilon \rightarrow 0} \left[ I_q + \frac{1}{\epsilon} \tilde{\mathbf{I}} (\xi)^{1/2} B B^T \tilde{\mathbf{I}} (\xi)^{1/2} \right]^{-1}   & = \lim_{\epsilon \rightarrow 0} I_q- \tilde{\mathbf{I}} (\xi)^{1/2} B (\epsilon I_p + B^T \tilde{\mathbf{I}} (\xi) B )^{-1} B^T \tilde{\mathbf{I}} (\xi)^{1/2}\\
		\tilde{\mathbf{I}} (\xi)^{-1/2} U_{\perp} U_{\perp}^T \tilde{\mathbf{I}} (\xi)^{-1/2} &= \tilde{\mathbf{I}} (\xi)^{-1} - B (B^T \tilde{\mathbf{I}} (\xi) B )^{-1} B^T
	\end{align*}
	where only the last step uses the fact $\tilde{\mathbf{I}} (\xi)$ is invertible.
Therefore
	\begin{align*}
		\xi(\theta_{t+dt}) - \xi(\theta_t) & = 	B(\theta_{t+dt} - \theta_{t}) \\
		&= - B \left[ B^T \mathbf{I}_n(\xi) B  \right]^{-1} B^T \nabla_{\xi} \tilde{L}(\xi) dt \\
		&= - \eta \mathbf{I} (\xi)^{-1/2} (I_{d} -  U_{\perp} U_{\perp}^T) \mathbf{I} (\xi)^{-1/2} \nabla_{\xi} \tilde{L}(\xi) dt \\
		 &= \mathbf{I} (\xi)^{-1/2} (I_{d} -  U_{\perp} U_{\perp}^T) \mathbf{I} (\xi)^{1/2}  \left\{ \mathbf{I} (\xi)^{-1} \nabla_{\xi} \tilde{L}(\xi) dt \right\} \\
		 &= M_t (\xi_{t+dt} - \xi_t).
	\end{align*}
	 The above claim asserts that in the over-parametrized setting, running natural gradient in the over-parametrized space is nearly ``invariant'' in the following sense: if $\xi(\theta_t) = \xi_t$, then
	 \begin{align*}
	 	\xi(\theta_{t+dt}) - \xi(\theta_t) = M_t \left( \xi_{t+dt} - \xi_t \right) \\
		M_t = \mathbf{I} (\xi_t)^{-1/2} (I_{q} -  U_{\perp} U_{\perp}^T) \mathbf{I} (\xi_t)^{1/2}
	 \end{align*}
	 and we know $M_t$ has eigenvalue either $1$ or $0$. In the case when $p = q$ and $\mathbf{J}_{\xi}(\theta)$ has full rank, it holds that $M_t = I$ is the identity matrix, reducing the problem to the re-parametrized case. 
	 
\end{proof}

\section{Experimental details}
In the realistic $K$-class classification context there is no activation function on the $K$-dimensional output layer of the network ($\sigma_{L+1}(x) = x$) and we focus on ReLU activation $\sigma(x) = \max\{0,x\}$ for the intermediate layers. The loss function is taken to be the cross entropy $\ell(y', y) = - \langle e_{y} , \log g(y') \rangle$, where $e_y \in \mathbb{R}^K$ denotes the one-hot-encoded class label and $g(z)$ is the softmax function defined by,
\begin{align*}
	g(z) 
		& = \left(\frac{\exp(z_1)}{\sum_{k=1}^K \exp(z_k)},\ldots,\frac{\exp(z_K)}{\sum_{k=1}^K \exp(z_k)}\right)^{T} \enspace .
\end{align*}
It can be shown that the gradient of the loss function with respect to the output of the neural network is $\nabla \ell(f, y) = - \nabla \langle e_y, \log g(f) \rangle  = g(f) - e_y$, so
plugging into the general expression for the Fisher-Rao norm we obtain,
\begin{align}
	\Vert \theta \Vert_{\rm fr}^2
		& = (L+1)^2 \mathbb{E} [\{\langle g(f_\theta(X)), f_\theta(X) \rangle - f_\theta(X)_Y\}^2].
\end{align}

In practice, since we do not have access to the population density $p(x)$ of the covariates, we estimate the Fisher-Rao norm by sampling from a test set of size $m$, leading to our final formulas
\begin{align}
	\Vert \theta \Vert^2_{\rm fr} 
		& = (L+1)^2\frac{1}{m}\sum_{i=1}^m \sum_{y=1}^K g(f_\theta(x_i))_y [\langle g(f_\theta(x_i)), f_\theta(x_i) \rangle - f_\theta(x_i)_y]^2 \enspace , \\
	\Vert \theta \Vert^2_{\rm fr,emp} 
		& = (L+1)^2\frac{1}{m}\sum_{i=1}^m [\langle g(f_\theta(x_i)), f_\theta(x_i) \rangle - f_\theta(x_i)_{y_i}]^2 \enspace .
\end{align}

\newpage
\subsection{Additional experiments and figures}


\begin{figure}[pht]
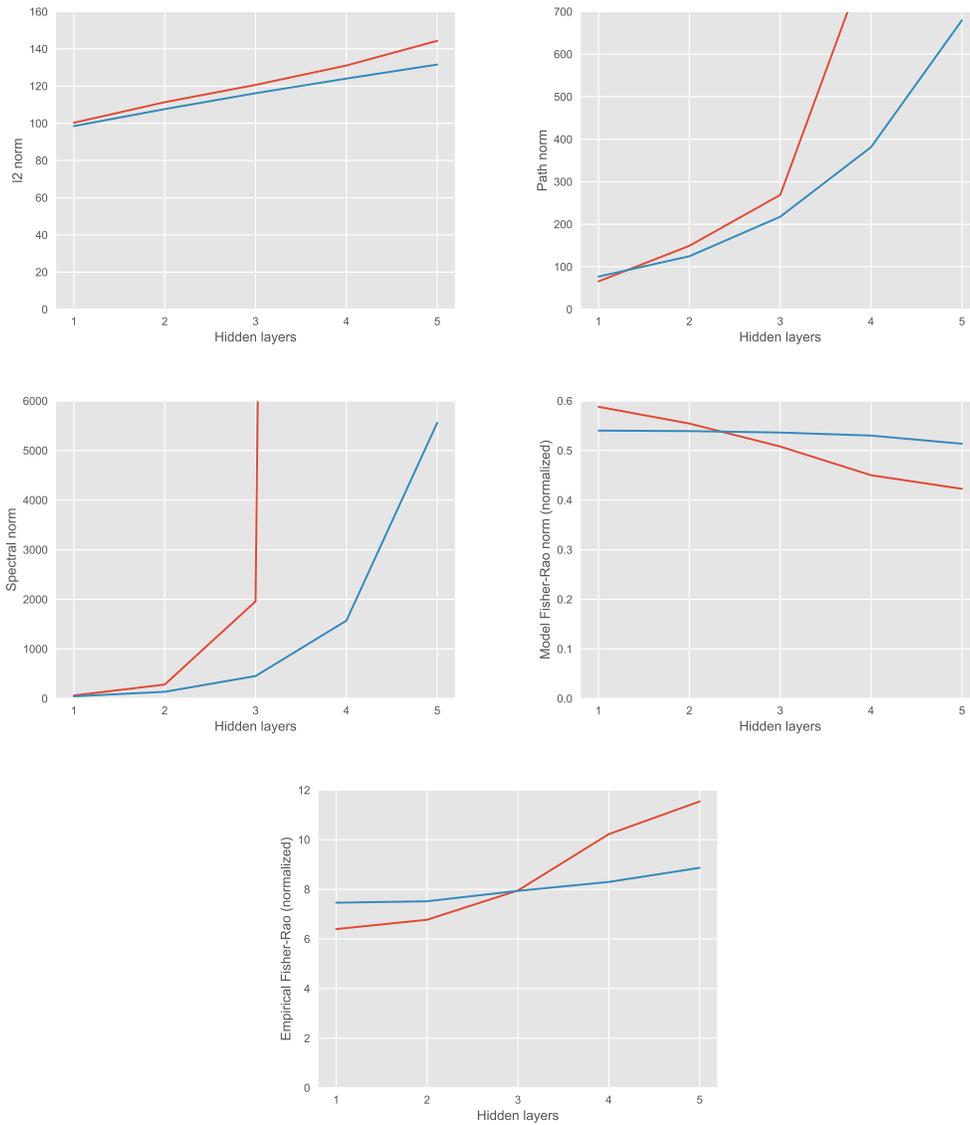

\centering
\includegraphics[width=0.4\textwidth]{cifar10_mlp_l2_depth.pdf}
\includegraphics[width=0.4\textwidth]{cifar10_mlp_path_depth.pdf}
\includegraphics[width=0.4\textwidth]{cifar10_mlp_spectral_depth.pdf}
\includegraphics[width=0.4\textwidth]{cifar10_mlp_fr_depth.pdf}
\includegraphics[width=0.4\textwidth]{cifar10_mlp_fr_emp_depth.pdf}
\caption{Dependence of different norms on depth $L$ ($k = 500$) after optimzing with vanilla gradient descent (red) and natural gradient descent (blue). The Fisher-Rao norms are normalized by $L+1$.}
\label{fig:l}
\end{figure}

\begin{figure}[pht]
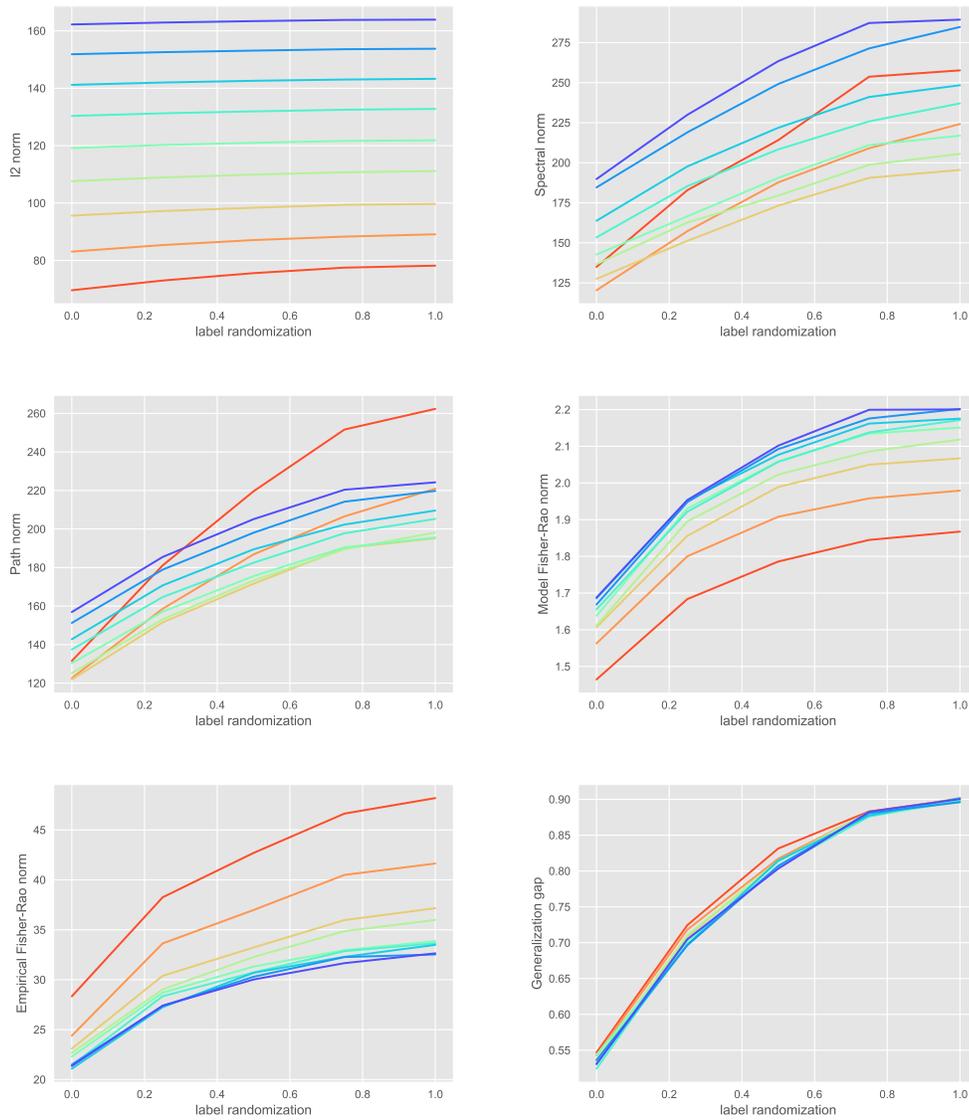

\centering
\includegraphics[width=0.4\textwidth]{cifar10_kfac_mlp_l2.pdf}
\includegraphics[width=0.4\textwidth]{cifar10_kfac_mlp_spectral.pdf}
\includegraphics[width=0.4\textwidth]{cifar10_kfac_mlp_path.pdf}
\includegraphics[width=0.4\textwidth]{cifar10_kfac_mlp_fr.pdf}
\includegraphics[width=0.4\textwidth]{cifar10_kfac_mlp_fr_emp.pdf}
\includegraphics[width=0.4\textwidth]{cifar10_kfac_mlp_gap.pdf}
\caption{Dependence of capacity measures on label randomization after optimizing with natural gradient descent. The colors show the effect of varying network width from $k=200$ (red) to $k=1000$ (blue) in increments of 100. The natural gradient optimization clearly distinguishes the network architectures according to their Fisher-Rao norm. \label{fig:kfac}}
\end{figure}

\begin{figure}[pht]
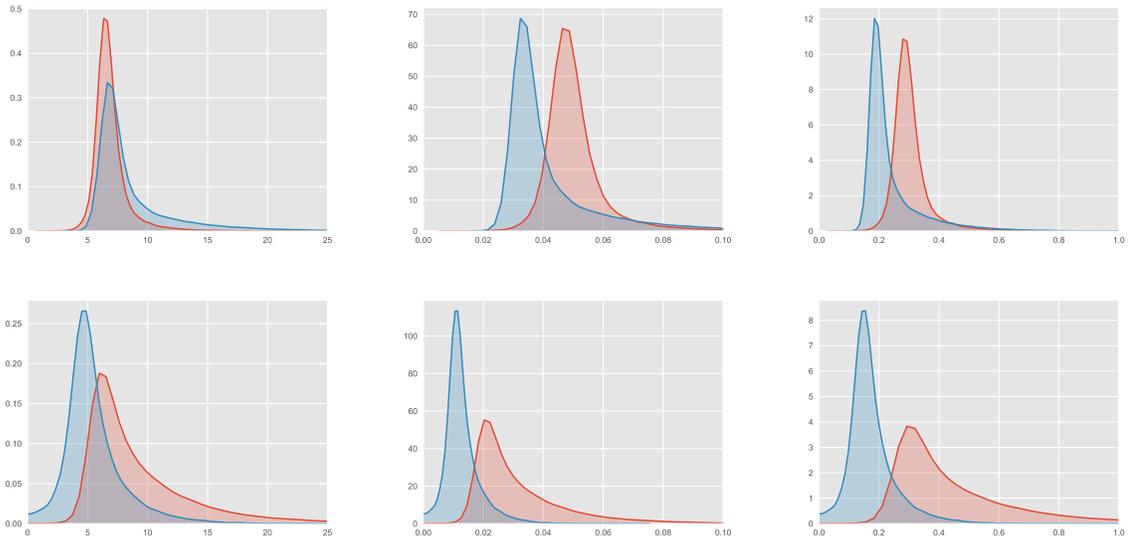

\centering
\includegraphics[width=0.3\textwidth]{cifar10_kfac_mlp_margins_unscaled.pdf}
\includegraphics[width=0.3\textwidth]{cifar10_kfac_mlp_margins_spectral.pdf}
\includegraphics[width=0.3\textwidth]{cifar10_kfac_mlp_margins_fr_emp.pdf}
\includegraphics[width=0.3\textwidth]{cifar10_momentum_mlp_margins_unscaled.pdf}
\includegraphics[width=0.3\textwidth]{cifar10_momentum_mlp_margins_spectral.pdf}
\includegraphics[width=0.3\textwidth]{cifar10_momentum_mlp_margins_fr_emp.pdf}
\caption{Distribution of margins found by natural gradient (top) and vanilla gradient (bottom)  before rescaling (left) and after rescaling by spectral norm (center) and empirical Fisher-Rao norm (right).\label{fig:margin}}
\end{figure}

\begin{figure}[pht]
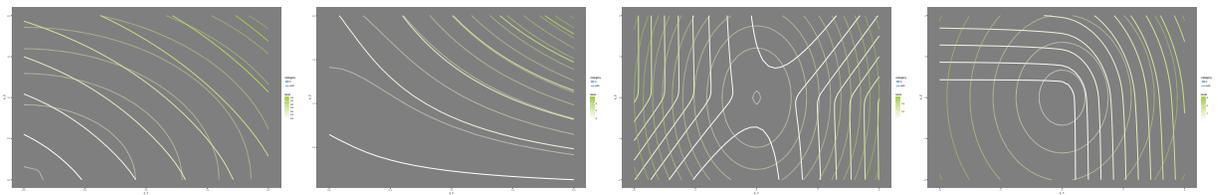

\centering
\includegraphics[width=0.23\textwidth]{norm-sim-1.pdf}
\includegraphics[width=0.23\textwidth]{norm-sim-2.pdf}
\includegraphics[width=0.23\textwidth]{norm-diff-1.pdf}
\includegraphics[width=0.23\textwidth]{norm-diff-2.pdf}
\caption{The levelsets of Fisher-Rao norm (solid) and path-$2$ norm (dotted). The color denotes the value of the norm.}
\label{fig:contour}
\end{figure}

\begin{figure}[pht]
\centering
\includegraphics[width=60mm]{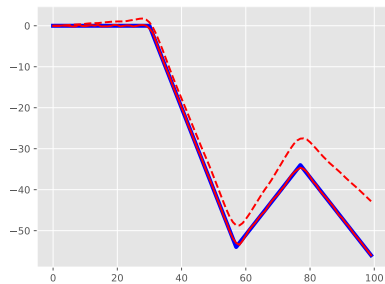}
\caption{Reproduction of conditioning experiment from \citep{shalev2017failures} after $10^4$ iterations of Adam (dashed) and K-FAC (red).}
\label{fig:failures}
\end{figure}

\end{document}